\newif\ifsubmission
\newcommand{\E}[2]{\mathbb{E}_{#1}{\left[#2\right]}}
\newcommand{\ind}{\mathrel{\perp\!\!\!\perp}}
\newcommand*{\argmin}{\mathop{\mathrm{argmin}}}
\newcommand*{\argmax}{\mathop{\mathrm{argmax}}}
\newcommand{\defeq}{\mathrel{\mathop:}=}
\newcommand{\supp}{\mathrm{supp}}
\DeclareRobustCommand\onedot{\futurelet\@let@token\@onedot}
\def\@onedot{\ifx\@let@token.\else.\null\fi\xspace}
\def\ie{\emph{i.e}\onedot} 
\def\st{\emph{s.t}\onedot}
\def\wrt{w.r.t\onedot} 
\newenvironment{subtheorem}[1]{%
  \def\subtheoremcounter{#1}%
  \refstepcounter{#1}%
  \protected@edef\theparentnumber{\csname the#1\endcsname}%
  \setcounter{parentnumber}{\value{#1}}%
  \setcounter{#1}{0}%
  \expandafter\def\csname the#1\endcsname{\theparentnumber.\Alph{#1}}%
  \expandafter\def\csname theH#1\endcsname{thm.\theparentnumber.\Alph{#1}}%
  \unskip\ignorespaces
}{%
  \setcounter{\subtheoremcounter}{\value{parentnumber}}%
  \ignorespacesafterend
}
\newcounter{parentnumber}
\declaretheorem[name=Theorem]{theorem}
\declaretheorem[name=Lemma]{lemma}
\declaretheorem[name=Definition]{definition}
\declaretheorem[name=Example]{example}
\definecolor{colorthm}{rgb}{1.0,0.9,0.9}
\definecolor{colordef}{rgb}{0.9,1.0,0.9}
\definecolor{colorprop}{rgb}{0.9,0.9,1.0}
\definecolor{colorasm}{rgb}{0.9,1.0,1.0}
\newcommand{\ctx}{l_{\mathrm{ctx}}}
\newcommand{\policymem}{l_{\mathrm{mem}}}
\newcommand{\valuemem}{l_{\mathrm{value}}^\mathcal M}
\newcommand{\rewardmem}{m_{\mathrm{reward}}^\mathcal M}
\newcommand{\transitionmem}{m_{\mathrm{transit}}^\mathcal M}
\newcommand{\memlen}{m^\mathcal M}
\newcommand{\creditlen}{c^\mathcal M}
\title{When Do Transformers Shine in RL? \\Decoupling Memory from Credit Assignment}
\author{%
  Tianwei Ni \\
  Mila, Université de Montréal \\
  \texttt{tianwei.ni@mila.quebec} \\
  \And
  Michel Ma \\
  Mila, Université de Montréal \\
  \texttt{michel.ma@mila.quebec} \\
  \AND
  Benjamin Eysenbach \\
  Princeton University \\
  \texttt{eysenbach@princeton.edu} \\
  \And
  Pierre-Luc Bacon \\
  Mila, Université de Montréal \\
  \texttt{pierre-luc.bacon@mila.quebec} \\
}
\begin{document}

\maketitle

\doparttoc %
\faketableofcontents %

\begin{abstract}

Reinforcement learning (RL) algorithms face two distinct challenges: learning effective representations of past and present observations, and determining how actions influence future returns. Both challenges involve modeling long-term dependencies. The Transformer architecture has been very successful to solve problems that involve long-term dependencies, including in the RL domain. However, the underlying reason for the strong performance of Transformer-based RL methods remains unclear: is it because they learn effective memory, or because they perform effective credit assignment? After introducing formal definitions of memory length and credit assignment length, we design simple configurable tasks to measure these distinct quantities. Our empirical results reveal that Transformers can enhance the memory capability of RL algorithms, scaling up to tasks that require memorizing observations $1500$ steps ago. However, Transformers do not improve long-term credit assignment. In summary, our results provide an explanation for the success of Transformers in RL, while also highlighting an important area for future research and benchmark design.
Our code is open-sourced\footnote{\url{https://github.com/twni2016/Memory-RL}}.

\end{abstract}

\section{Introduction}

In recent years, Transformers~\citep{vaswani2017attention,radford2019language} have achieved remarkable success in domains ranging from language modeling to computer vision. 
Within the RL community, there has been excitement around the idea that large models with attention architectures, such as Transformers, might enable rapid progress on challenging RL tasks. Indeed, prior works have shown that Transformer-based methods can achieve excellent results in offline RL~\citep{chen2021decision,janner2021reinforcement,lee2022multi}, online RL~\citep{parisotto2020stabilizing,lampinen2021towards,zheng2022online, melo2022transformers,micheli2022transformers,robine2023transformer}, and real-world tasks~\citep{ouyang2022training,ahn2022can} (see \citet{li2023survey,agarwal2023transformers} for the recent surveys). 

However, the underlying reasons why Transformers achieve excellent results in the RL setting remain a mystery. Is it because they learn better representations of sequences, akin to their success in computer vision and NLP tasks? Alternatively, might they be internally implementing a learned algorithm, one that performs better credit assignment than previously known RL algorithms? 
At the core of these questions is the fact that RL, especially in partially observable tasks, 
requires two distinct forms of temporal reasoning: (working) \textbf{memory} and (temporal) \textbf{credit assignment}. 
Memory refers to the ability to recall a distant past event at the current time~\citep{blankenship1938memory,dempster1981memory}, while credit assignment is the ability to determine \textit{when} the actions that deserve current credit occurred~\citep{sutton1984temporal}.
These two concepts are also loosely intertwined -- learning what bits of history to remember depends on future rewards, and learning to assign current rewards to previous actions necessitates some form of (episodic) memory~\citep{zilli2008modeling,gershman2017reinforcement}. \looseness=-1

Inspired by the distinct nature of memory and credit assignment in temporal dependencies, we aim to understand which of these two concepts Transformers address in the context of RL.
Despite empirical success in prior works, directly answering the question is challenging due to two issues. 
First, many of the benchmarks used in prior works, such as Atari~\citep{bellemare13arcade} and MuJoCo~\citep{todorov2012mujoco}, require minimal memory and short-term credit assignment, because they closely resemble MDPs and their rewards are mostly immediate. Other benchmarks, such as Key-to-Door~\citep{hung2019optimizing,mesnard2020counterfactual}, entangle medium-term memory and credit assignment. 
Second, there is a lack of rigorous definition of memory and credit assignment in RL~\citep{osband2019behaviour}. The absence of quantifiable measures creates ambiguity in the concept of ``long-term'' often found in prior works. 

In this paper, we address both issues concerning memory and credit assignment to better understand Transformers in RL. First, we provide a mathematical definition of memory lengths and credit assignment lengths in RL, grounded in common understanding. We distinguish the memory lengths in reward, transition, policy, and value, which relate to the minimal lengths of recent histories needed to preserve the corresponding quantities. 
Crucially, our main theoretical result reveals that the memory length of an optimal policy can be upper bounded by the reward and transition memory lengths.
For credit assignment, we adopt the forward view and define its length as the minimal number of future steps required for a greedy action to start outperforming any non-greedy counterpart. 

The definitions provided are not only quantitative in theory but also actionable for practitioners, allowing them to analyze the memory and credit assignment requirements of many existing tasks.
Equipped with these tools, we find that many tasks designed for evaluating memory also evaluate credit assignment, and vice versa, as demonstrated in Table~\ref{tab:benchmarks}.
As a solution, we introduce toy examples called Passive and Active T-Mazes that decouple memory and credit assignment. These toy examples are configurable, enabling us to perform a scalable unit-test of a sole capability, in line with best practice~\citep{osband2019behaviour}.   

Lastly, we evaluate memory-based RL algorithms with LSTMs or Transformers on our configurable toy examples and other challenging tasks to address our research question. We find that Transformers can indeed boost long-term memory in RL, scaling up to memory lengths of $1500$. However, Transformers do not improve long-term credit assignment, and struggle with data complexity in even short-term dependency tasks.   
While these results suggest that practitioners might benefit from using Transformer-based architectures in RL, they also highlight the importance of continued research on core RL algorithms. Transformers have yet to replace RL algorithm designers.

\section{Measuring Temporal Dependencies in RL}
\label{sec:concepts}

\paragraph{MDPs and POMDPs.} In a partially observable Markov decision process (POMDP) $\mathcal M_O = (\mathcal O, \mathcal A, P, R, \gamma, T)$\footnote{The classic notion of POMDPs~\citep{kaelbling1998planning} includes a state space, but we omit it because we assume that it is unknown.}, an agent receives an observation $o_t \in \mathcal O$ at step $t \in \{1,\dots, T\}$, takes an action $a_t \in \mathcal A$ based on the observed history $h_{1:t} \defeq (o_{1:t},a_{1:t-1})\in \mathcal H_t$\footnote{For any $k \in \{1,\dots,t\} $, let $o_{t-k:t} $ denote the sequence $(o_{t-k}, \dots, o_t)$, and similarly for $a_{t-k:t}$. In particular, let $h_{t:t} = o_t$  to align with the MDP setting, and let $h_{t':t} = \emptyset$ for $t' > t$ and $h_{t':t} = h_{1:t}$ for $t' < 1$.},
and receives a reward $r_t \sim R_t(h_{1:t},a_t)$ and the next observation $o_{t+1} \sim P(\cdot \mid h_{1:t},a_t)$. The initial observation $h_{1:1} \defeq o_1$ follows $P(o_1)$. The total horizon is $T \in \mathbb N^+ \cup \{+\infty\}$ and the discount factor is $\gamma \in [0,1]$ (less than $1$ for infinite horizon). 
In a Markov decision process (MDP) $\mathcal M_S = (\mathcal S,\mathcal A, P, R, \gamma, T)$, the observation $o_t$ and history $h_{1:t}$ are replaced by the state $s_t \in \mathcal S$.
In the most generic case, the agent is composed of a policy $\pi(a_t\mid h_{1:t})$ and a value function $Q^\pi(h_{1:t},a_t)$. 
The optimal value function $Q^*(h_{1:t},a_t)$ satisfies $Q^*(h_{1:t},a_t) = \E{}{r_t \mid h_{1:t},a_t} + \gamma \E{o_{t+1}\sim P(\mid h_{1:t},a_t)}{\max_{a_{t+1}} Q^*(h_{1:t+1},a_{t+1})}$ and induces a deterministic optimal policy $\pi^*(h_{1:t}) = \argmax_{a_t} Q^*(h_{1:t},a_t)$. Let $\Pi^*_\mathcal M$ be the space of all optimal policies for a POMDP $\mathcal M$.

Below, we provide definitions for context-based policy and value function, as well as the sum of $n$-step rewards given a policy. These concepts will be used in defining memory and credit assignment lengths. \looseness=-1

\paragraph{Context-based policy and value function.}
A context-based policy $\pi$ takes the recent $\ctx(\pi) $ observations and actions as inputs, expressed as $\pi(a_t \mid h_{t-\ctx(\pi)+1:t})$. Here, $\ctx(\pi) \in\mathbb N$ represents the \textbf{policy context length}. For simplicity, we refer to context-based policies as ``policies''.
Let $\Pi_k$ be the space of all policies with a context length of $k$.
Markovian policies form $\Pi_1$, while $\Pi_{T}$, the largest policy space, contains an optimal policy. 
Recurrent policies, recursively taking one input at a time, belong to $\Pi_{\infty}$; while Transformer-based policies have a limited context length. 
\textbf{Context-based value function} $Q_n^\pi$ of a POMDP $\mathcal M$ is the expected return conditioned on the past $n$ observations and actions, denoted as 
$
Q^\pi_n(h_{t-n+1:t},a_t) = \E{\pi,\mathcal M}{\sum_{i=t} \gamma^{i-t} r_i \mid h_{t-n+1:t},a_t} 
$.
By definition, $Q^\pi(h_{1:t},a_t) = Q^\pi_{T}(h_{1:t},a_t), \forall h_{1:t},a_t$. 

\paragraph{The sum of $n$-step rewards.}
Given a POMDP $\mathcal M$ and a policy $\pi$, the expectation of the discounted sum of $n$-step rewards is denoted as 
$
G^\pi_n(h_{1:t},a_t) = \E{\pi,\mathcal M}{\sum_{i=t}^{t+n-1} \gamma^{i-t} r_i \mid h_{1:t}, a_t} 
$.
By definition, $G^\pi_1(h_{1:t},a_t) = \E{}{r_t \mid h_{1:t},a_t}$ (immediate reward), and $G^\pi_{T-t+1}(h_{1:t},a_t) = Q^\pi(h_{1:t},a_t)$ (value).

\subsection{Memory Lengths}

In this subsection, we introduce the memory lengths in the common components of RL, including the reward, transition, policy, and value functions. Then, we will show the theoretical result of the relation between these lengths.

\begin{subtheorem}{definition}

The reward (or transition) memory length of a POMDP quantifies the temporal dependency distance between \textit{current expected reward} (or \textit{next observation distribution}) and the most distant observation. 
\begin{definition}[\textbf{Reward memory length} $\rewardmem$]
\label{def:reward_memory_length}
For a POMDP $\mathcal M$, $\rewardmem$ is the smallest $n\in \mathbb N$ such that the expected reward conditioned on recent $n$ observations is the same as the one conditioned on full history, \ie,
$\E{}{r_t \mid h_{1:t}, a_t} = \E{}{r_t \mid h_{t-n+1:t}, a_t}, \forall t, h_{1:t}, a_t$.
\end{definition}

\begin{definition}[\textbf{Transition memory length} $\transitionmem$]
\label{def:transition_memory_length}
For a POMDP $\mathcal M$, $\transitionmem$ is the smallest $n\in \mathbb N$ such that next observations are conditionally independent of the rest history given the past $n$ observations and actions, \ie,
$
o_{t+1} \ind h_{1:t},a_t \mid h_{t-n+1:t},a_t
$.
\end{definition}

Next, policy memory length is defined based on the intuition that while a policy may take a long history of observations as its context, its action distribution only depends on recent observations. Therefore, policy memory length represents the temporal dependency distance between current \textit{action} and the most distant observation. 

\begin{definition}[\textbf{Policy memory length $\policymem(\pi)$}]
\label{def:policy_memory_length}
The policy memory length $\policymem(\pi)$ of a policy $\pi$ is the minimum horizon $n \in \{0,\dots,\ctx(\pi)\}$ such that its actions are conditionally independent of the rest history given the past $n$ observations and actions, \ie, $
a_t \ind h_{t-\ctx(\pi)+1:t}\mid h_{t-n+1:t}
$.
\end{definition}

Lastly, we define the value memory length of a policy, which measures the temporal dependency distance between the current \textit{value} and the most distant observation.

\begin{definition}[\textbf{Value memory length of a policy $\valuemem(\pi)$}]
\label{def:value_memory_length}
A policy $\pi$ has its value memory length $\valuemem(\pi)$ as the smallest $n$ such that the context-based value function is equal to the value function, \ie, $Q^\pi_n(h_{t-n+1:t},a_t) = Q^\pi(h_{1:t},a_t)$, $\forall t, h_{1:t},a_t$. 
\end{definition}
\end{subtheorem}

Now we prove that the policy and value memory lengths of optimal policies can be upper bounded by the reward and transition memory lengths (Proof in Appendix~\ref{app:proof}).
\begin{theorem}[\textbf{Upper bounds of memory lengths for optimal policies}]
\label{thm:optimal_policy_memory} 
For optimal policies with the shortest policy memory length, their policy memory lengths are upper bounded by their value memory lengths, which are further bounded by the maximum of reward and transition memory lengths. 
\begin{equation}
\policymem(\pi^*) \le \valuemem(\pi^*) \le \max(\rewardmem, \transitionmem) \defeq \memlen, \quad \forall \pi^* \in \argmin_{\pi \in \Pi^*_\mathcal M} \{\policymem(\pi)\} 
\end{equation}
\end{theorem}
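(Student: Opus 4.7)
The key structural observation is that every optimal policy $\pi^* \in \Pi^*_\mathcal M$ realizes the same state--action value function $Q^{\pi^*} = Q^*$, so the value memory length $\valuemem(\pi^*)$ is actually a common quantity depending only on $\mathcal M$; call it $\valuemem^*$. The proof then naturally splits into (i) bounding $\valuemem^* \le \memlen$ directly from the Bellman optimality recursion, and (ii) exhibiting a specific optimal policy whose $\policymem$ is at most $\valuemem^*$, so that the minimizer $\pi^*$ in the argmin automatically inherits the inequality.

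For (i), I would run a backward induction on $t$ (finite horizon $T$) using the Bellman optimality equation
\[
Q^*(h_{1:t},a_t) = \E{}{r_t \mid h_{1:t},a_t} + \gamma\,\E{o_{t+1}\sim P(\cdot \mid h_{1:t},a_t)}{\max_{a_{t+1}} Q^*(h_{1:t+1},a_{t+1})}.
\]
Writing $m = \memlen$, the base case $t=T$ uses only the reward term, which by Definition~\ref{def:reward_memory_length} collapses to a function of $h_{T-\rewardmem+1:T}$ and $a_T$, and hence of $h_{T-m+1:T}$ and $a_T$. For the inductive step, suppose $Q^*(h_{1:t+1},a_{t+1})$ is expressible as a function of only $h_{t-m+2:t+1}$ and $a_{t+1}$. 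The first Bellman term collapses to depend on $h_{t-\rewardmem+1:t} \subseteq h_{t-m+1:t}$ and $a_t$; for the second term, Definition~\ref{def:transition_memory_length} lets me replace the transition law by $P(\cdot \mid h_{t-\transitionmem+1:t},a_t)$, whose conditioning lies in $h_{t-m+1:t}$, while $\max_{a_{t+1}} Q^*(h_{1:t+1},a_{t+1})$ depends only on $h_{t-m+2:t}$, $a_t$, and $o_{t+1}$. Integrating $o_{t+1}$ out leaves a function of $h_{t-m+1:t}$ and $a_t$, closing the induction. For the infinite-horizon case, the same calculation shows that the Bellman optimality operator maps the (closed, with respect to uniform convergence) set of bounded functions expressible through the last $m$ observations into itself, so its unique fixed point $Q^*$ lies in the same set.

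For (ii), let $\tilde\pi$ be any deterministic greedy policy with respect to $Q^*$, with ties broken by a rule that depends only on the arguments of $Q^*(h_{1:t},\cdot)$. Since $Q^*(h_{1:t},a_t)$ can be written as a function of only $h_{t-\valuemem^*+1:t}$ and $a_t$, so can $\argmax_{a_t} Q^*(h_{1:t},a_t)$, giving $\policymem(\tilde\pi) \le \valuemem^*$. The policy $\tilde\pi$ is optimal, and because $\pi^*$ minimizes $\policymem$ over $\Pi^*_\mathcal M$, we obtain $\policymem(\pi^*) \le \policymem(\tilde\pi) \le \valuemem^* = \valuemem(\pi^*)$, which combined with (i) yields the entire chain of inequalities claimed in the theorem.

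The main obstacle is the inductive step in (i): one must carefully verify that taking an expectation over $o_{t+1}$ under a transition law whose dependence is capped at $\transitionmem$ \emph{shrinks} the window from $h_{t-m+2:t+1}$ back to $h_{t-m+1:t}$ rather than expanding it, and that the maximum over $a_{t+1}$ does not smuggle in any dependence on past actions beyond those already in the value function's arguments. Everything else---the equality of $Q^{\pi^*}$ across all optimal policies and the invocation of the argmin condition on $\pi^*$---is routine bookkeeping from the definitions.
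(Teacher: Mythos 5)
Your proposal is correct in substance but takes a genuinely different route from the paper. The paper never works with $Q^*$ directly: its Lemma~\ref{lm:value_memory_length} bounds the value memory length of an \emph{arbitrary} policy by $\max(\rewardmem, \transitionmem, \policymem(\pi)-1)$, obtained by unrolling $Q^\pi$ into a sum of future reward terms whose reward, transition, and policy factors each have bounded windows; its Lemma~\ref{prop:policy_improv} proves $\policymem(\pi^*)\le\valuemem(\pi^*)$ by contradiction, via a policy-improvement argument applied to the greedy policy w.r.t.\ $Q^{\pi^*}_{\valuemem(\pi^*)}$; and the theorem then falls out of the self-referential inequality $\policymem(\pi^*)\le\max(\rewardmem,\transitionmem,\policymem(\pi^*)-1)$, which forces the $\policymem(\pi^*)-1$ term out of the max. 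You instead push the window bound through the Bellman \emph{optimality} recursion (backward induction for finite $T$, a contraction/fixed-point argument for $\gamma<1$), showing $Q^*$ is a function of the last $\memlen$ steps, which cleanly avoids the policy-dependent $\policymem(\pi)-1$ term and the contradiction argument: the greedy policy w.r.t.\ $Q^*$ is optimal with $\policymem\le\valuemem^*$, and minimality of $\policymem(\pi^*)$ over $\Pi^*_\mathcal M$ does the rest. What the paper's route buys is a bound valid for every policy (Lemma~\ref{lm:value_memory_length} is of independent interest) and no reliance on identifying $Q^{\pi^*}$ with $Q^*$; what your route buys is a shorter, more transparent derivation of the second inequality and a direct (non-contradiction) proof of the first. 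Two caveats you should make explicit: your opening claim that $Q^{\pi^*}=Q^*$ for \emph{all} $\pi^*\in\Pi^*_\mathcal M$ (hence that $\valuemem(\pi^*)$ is a common quantity $\valuemem^*$) requires reading ``optimal'' as ``achieves $V^*$ at every history''; under the weaker ``optimal from the initial distribution'' reading, optimal policies may behave arbitrarily off-support and the identification fails, so this assumption is load-bearing and should be stated (the paper's argument does not need it). Second, the assertions that the greedy policy w.r.t.\ $Q^*$ is optimal, and that ``$Q^*$ depends only on the last $n$ steps'' is equivalent to $\valuemem^*\le n$ (via the tower property applied to $Q^{\pi}_n$), are exactly the history-based policy-improvement and conditioning steps the paper spells out; calling them routine is defensible, but a complete write-up should include them.
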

Thm.~\ref{thm:optimal_policy_memory} suggests that the constant of $\memlen$ can serve as a proxy to analyze the policy and value memory lengths required in a task, which are often challenging to directly compute, as we will demonstrate in Sec.~\ref{sec:task_analysis}.\footnote{Thm.~\ref{thm:optimal_policy_memory} also applies to MDPs, where it provides a quick proof of the well-known fact that optimal policies and values of MDPs can be Markovian: $\rewardmem$, $\transitionmem$ are both at most $1$, so the policy and value memory lengths are also at most $1$.}

\subsection{Credit Assignment Lengths}

Classically, temporal credit assignment is defined with a backward view~\citep{minsky1961steps,sutton1984temporal} -- how previous actions contributed to current reward. The backward view is specifically studied through gradient back-propagation in supervised learning~\citep{lee2015difference,ke2018sparse}. In this paper, we consider the forward view in reinforcement learning -- how current action will influence future rewards. 
Specifically, we hypothesize that the credit assignment length $n$ of a policy is how long the greedy action at the current time-step \textit{starts to} make a difference to the sum of future $n$-step rewards. \looseness=-1

Before introducing the formal definition, let us recall the common wisdom on dense-reward versus sparse-reward tasks. Dense reward is generally easier to solve than sparse reward if all other aspects of the tasks are the same. One major reason is that an agent can get immediate reward feedback in dense-reward tasks, while the feedback is delayed or even absent in sparse-reward tasks. In this view, solving sparse reward requires long-term credit assignment. Our definition will reflect this intuition.

\begin{subtheorem}{definition}

\begin{definition}[\textbf{Credit assignment length of a history given a policy} $c(h_{1:t} ; \pi)$]
Given a policy $\pi$ in a task, the credit assignment length of a history $h_{1:t}$ quantifies the minimal temporal distance $n$\footnote{The $n$ is guaranteed to exist because a greedy action must be the best in $T-t+1$-step rewards (\ie $Q$-value).}, such that a greedy action $a_t^*$ is \emph{strictly better} than any non-greedy action $a_t'$ in terms of their $n$-step rewards $G_n^\pi$.  
Formally, let $A_t^* \defeq \argmax_{a_t} Q^\pi(h_{1:t},a_t)$\footnote{If $A_t^* = \mathcal A$, \ie all actions are greedy, then the set of $n$ contains all integers between $1$ and $T-t+1$.},  define
\begin{align}
\label{eq:credit_assignment_length}
&c(h_{1:t} ;\pi) \defeq  \min_{1\le n\le T-t +1} \left\{n  \mid 
\exists a_t^* \in A_t^*, \st \,  G^\pi_n(h_{1:t},a_t^*) > G^\pi_n(h_{1:t},a_t'), \forall a_t' \not\in A_t^*
\right\}
\end{align}
\end{definition}

\begin{definition}[\textbf{Credit assignment length of a policy} $c(\pi)$ and \textbf{of a task} $\creditlen$]
\label{def:credit_assignment_length}
Credit assignment length of a policy $\pi$ represents the worst case of credit assignment lengths of all visited histories. Formally, let $d_\pi$ be the history occupancy distribution of $\pi$, define 
$c(\pi) \defeq \max_{h_{1:t}\in \supp (d_\pi)}  c(h_{1:t} ;\pi)$.

Further, we define the credit assignment length of a task $\mathcal M$, denoted as $\creditlen$, as the minimal credit assignment lengths over all optimal policies\footnote{If there are multiple optimal policies, $c(\pi^*)$ may be not unique. See Appendix~\ref{app:proof} for an example.}, \ie,
$
\creditlen \defeq \min_{\pi^*\in \Pi^*_\mathcal M} c(\pi^*)
$.
\end{definition}

\end{subtheorem}

As an illustrative example, consider the extreme case of a sparse-reward MDP where the reward is only provided at the terminal time step $T$. In this environment, the credit assignment length of any policy $\pi$ is $c(\pi)=T$. This is because the credit assignment length of the initial state $s_1$ is always $c(s_1;\pi) = T$, as the performance gain of any greedy action at $s_1$ is only noticeable after $T$ steps.

Our definition of credit assignment lengths has ties with the temporal difference (TD) learning algorithm~\citep{sutton1988learning}. With the tabular TD(0) algorithm, the value update is given by $V(s_t) \gets V(s_t) + \alpha (r_t + \gamma V(s_{t+1}) - V(s_t))$. This formula back-propagates the information of current reward $r_t$ and next value $V(s_{t+1})$ to current value $V(s_t)$. When we apply the TD(0) update backwards in time for $n$ steps along a trajectory, we essentially back-propagate the information of $n$-step rewards to the current value. Thus, the credit assignment length of a state measures the number of TD(0) updates we need to take a greedy action for that state.

\section{Environment Analysis of Memory and Credit Assignment Lengths}
\label{sec:task_analysis}

\begin{table}[h]
    \centering
    \caption{\textbf{\emph{Estimated} memory and credit assignment lengths required in prior tasks}, as defined in Sec.~\ref{sec:concepts}. The top block shows tasks designed for memory, while the bottom one shows tasks designed for credit assignment. All tasks have a horizon of $T$ and a discount factor of $\gamma \in (0, 1)$. The $T$ column shows the largest value used in prior work. The terms ``long'' and ``short'' are relative to horizon $T$. We  \textit{manually} estimate these lengths from task definition and mark tasks \textit{purely} evaluating long-term memory or credit assignment in black.}
    \vspace{-0.5em}
    \begin{tabular}{c|c|c|ccc}
    \toprule
       & \textbf{Task} $\mathcal M$ & $T$ & $\policymem(\pi^*)$ & $\memlen$ & $\creditlen$ \\
       \midrule
       \parbox[t]{2mm}{\multirow{22}{*}{\rotatebox[origin=c]{90}{\textbf{Memory}}}} 
       & Reacher-pomdp~\citep{yang2021recurrent} & 50 & long &long & short \\
    & Memory Cards~\citep{esslinger2022deep} & 50 & long & long & $1$ \\
        & TMaze Long (Noise)~\citep{beck2019amrl} & 100 & $T$ & $T$ & $1$ \\
        & Memory Length~\citep{osband2019behaviour} & 100 & $T$ & $T$ & $1$ \\ 
       & Mortar Mayhem~\citep{pleines2023memory} & 135 & long & long & $\le 25$ \\
       & Autoencode~\citep{morad2023popgym} & 311 & $T$ & $T$ & $1$  \\ 
    & Numpad~\citep{parisotto2020stabilizing} & 500 & long & long & short \\
       & PsychLab~\citep{fortunato2019generalization} & 600 & $T$ & $T$ & short\\
       & Passive Visual Match~\citep{hung2019optimizing} & 600 & $T$ & $T$ & $\le 18$ \\  
      & Repeat First~\citep{morad2023popgym} & 831 & $2$ & $T$ & $1$  \\ 
      & Ballet~\citep{lampinen2021towards} & 1024 & $\ge 464$ & $T$ & short  \\ 
       & Passive Visual Match (Sec.~\ref{sec:passive}; Our experiment) & 1030 & $T$ & $T$ & $\le 18$ \\ 
      & $17^2$ MiniGrid-Memory~\citep{gym_minigrid} & 1445 & $\le 51$ & $T$ & $\le 51$ \\
       & Passive T-Maze (Eg.~\ref{eg:tmaze_passive}; Ours) & 1500 & $T$& $T$ & $1$ \\
       &  $15^2$ Memory Maze~\citep{pasukonis2022memmaze} & \textbf{4000} & long & long & $\le 225$ \\
       & \textcolor{gray}{HeavenHell}~\citep{esslinger2022deep} & 20 & $T$ & $T$ & $T$ \\ 
       & \textcolor{gray}{T-Maze}~\citep{bakker2001reinforcement} & 70 & $T$ & $T$ & $T$ \\
      & \textcolor{gray}{Goal Navigation}~\citep{fortunato2019generalization} & 120 & $T$ & $T$ & $T$\\
       & \textcolor{gray}{T-Maze}~\citep{lambrechts2021warming} & 200 & $T$ & $T$ & $T$ \\
       & \textcolor{gray}{Spot the Difference}~\citep{fortunato2019generalization} & 240 & $T$  & $T$ & $T$\\
      & \textcolor{gray}{PyBullet-P benchmark}~\citep{ni2021recurrent} & 1000 & $2$ & $2$ & short \\ 
       & \textcolor{gray}{PyBullet-V benchmark}~\citep{ni2021recurrent} & 1000 & short & short & short \\ 
       \midrule
       \parbox[t]{2mm}{\multirow{8}{*}{\rotatebox[origin=c]{90}{\textbf{Credit Assignment}}}} & Umbrella Length~\citep{osband2019behaviour} & 100 & $1$ & $1$ & $T$ \\
       & \textcolor{gray}{Push-r-bump}~\citep{yang2021recurrent} & 50 & long & long & long \\
      & \textcolor{gray}{Key-to-Door}~\citep{raposo2021synthetic} & 90 & short & $T$ & $T$  \\
      & \textcolor{gray}{Delayed Catch}~\citep{raposo2021synthetic} & 280 & $1$ & $T$ & $T$  \\
       & \textcolor{gray}{Active T-Maze} (Eg.~\ref{eg:tmaze_active}; Ours) & 500 & $T$ &  $T$ & $T$ \\
    & \textcolor{gray}{Key-to-Door} (Sec.~\ref{sec:active}; Our experiment) & 530 & short & $T$ & $T$  \\
       & \textcolor{gray}{Active Visual Match}~\citep{hung2019optimizing} & 600& $T$ & $T$ & $T$ \\
       & \textcolor{gray}{Episodic MuJoCo}~\citep{ren2021learning} & \textbf{1000} & $1$ &$T$ & $T$ \\

    \bottomrule
    \end{tabular}
    \label{tab:benchmarks}
    \vspace{-2em}
\end{table}

By employing the upper bound $\memlen$ on memory lengths (Thm.~\ref{thm:optimal_policy_memory}) and credit assignment length $\creditlen$ (Def.~\ref{def:credit_assignment_length}), we can perform a quantitative analysis of various environments. These environments include both abstract problems and concrete benchmarks. 
In this section, we aim to address two questions: (1) Do prior environments actually require long-term memory or credit assignment?  (2) Can we disentangle memory from credit assignment? 

\subsection{How Long Do Prior Tasks Require Memory and Credit Assignment?}

To study memory or credit assignment in RL, prior works propose abstract problems and evaluate their algorithms in benchmarks. A prerequisite for evaluating agent capabilities in memory or credit assignment is understanding the requirements for them in the tested tasks. Therefore, we collect representative problems and benchmarks with a similar analysis in Sec.~\ref{sec:disentangle}, to figure out the quantities $\memlen$ related to memory lengths and $\creditlen$ related to credit assignment length. 

Regarding abstract problems, we summarize the analysis in Appendix~\ref{app:envs}. For example, decomposable episodic reward assumes the reward is only given at the terminal step and can be decomposed into the sum of Markovian rewards. This applies to episodic MuJoCo~\citep{ren2021learning} and Catch with delayed rewards~\citep{raposo2021synthetic}. We show that it has policy memory length $\policymem(\pi^*)$ at most $1$ while requiring $\rewardmem$ and $\creditlen$ of $T$, indicating that it is really challenging in both memory and credit assignment.
Delayed environments are known as difficult to assign credit~\citep{sutton1984temporal}. We show that delayed reward~\citep{arjona2019rudder} or execution~\citep{derman2021acting} or observation~\citep{katsikopoulos2003markov} with $n$ steps all require $\rewardmem$ of $n$, while only delayed rewards and execution may be challenging in credit assignment.

As for concrete benchmarks, Table~\ref{tab:benchmarks} summarizes our analysis, with detailed descriptions of these environments in Appendix~\ref{app:envs}. First, some tasks designed for long-term memory~\citep{beck2019amrl,osband2019behaviour,hung2019optimizing,yang2021recurrent,esslinger2022deep} or credit assignment~\citep{osband2019behaviour} indeed \textit{solely} evaluate the respective capability. 
Although some memory tasks such as Numpad~\citep{parisotto2020stabilizing} and Memory Maze~\citep{pasukonis2022memmaze} do require long-term memory with short-term credit assignment, the exact memory and credit assignment lengths are hard to quantify due to the complexity.
In contrast, our Passive T-Maze introduced in Sec.~\ref{sec:disentangle}, provides a clear understanding of the memory and credit assignment lengths, allowing us to successfully scale the memory length up to $1500$.

Conversely, other works~\citep{bakker2001reinforcement,lambrechts2021warming,esslinger2022deep,hung2019optimizing,raposo2021synthetic,ren2021learning} actually entangle memory and credit assignment together, despite their intention to evaluate one of the two. Lastly, some POMDPs~\citep{ni2021recurrent} only require short-term memory and credit assignment, which cannot be used for any capability evaluation.

\subsection{Disentangling Memory from Credit Assignment}
\label{sec:disentangle}

\begin{wrapfigure}{R}{0.5\textwidth}
    \centering
    \vspace{-1em}
     \includegraphics[width=\linewidth]{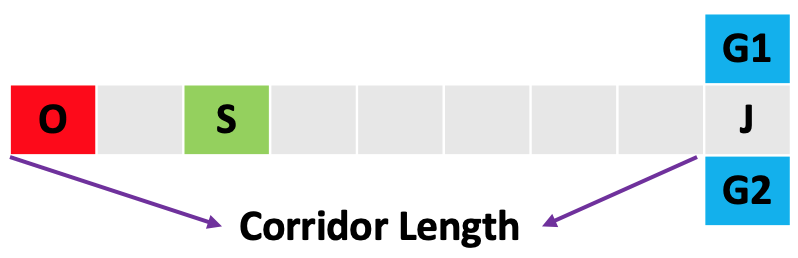}
    \caption{\textbf{T-Maze task.} In the passive version, the agent starts at the oracle state (``S'' and ``O'' are the same state), while the active version requires that the agent navigate left from the initial state to collect information from the oracle state.}
    \vspace{-1em}
    \label{fig:tmaze}
\end{wrapfigure}

We can demonstrate that memory and credit assignment are independent concepts by providing two examples, all with a finite horizon of $T$ and no discount. The examples have distinct pairs of $(\max(r_{\mathcal M}, \transitionmem), \creditlen)$:  $(T,1), (T,T)$, respectively.

We adapt T-Maze (Fig.~\ref{fig:tmaze}), a POMDP environment rooted in neuroscience~\citep{o1971hippocampus} and AI~\citep{bakker2001reinforcement,osband2019behaviour}, to allow different credit assignment lengths. Passive and active concepts are borrowed from~\citet{hung2019optimizing}. 

\begin{example}[\textbf{Passive T-Maze}]
\label{eg:tmaze_passive}
T-Maze has a 4-direction action space $\{\mathcal L,\mathcal R,\mathcal U,\mathcal D\}$. It features a long corridor of length $L$, starting from (oracle) state $O$ and ending with (junction) state $J$, connecting two potential goal states $G_1$ and $G_2$. $O$ has a horizontal position of $0$, and $J$ has $L$.
The agent observes null $N$  except at states $J,O,G_1,G_2$. The observation space $\{N,J,O,G_1,G_2\}$ is two-dimensional.
At state $O$, the agent can observe the goal position $G \in \{G_1,G_2\}$, uniformly sampled at the beginning of an episode. The initial state is $S$. The transition is deterministic, and the agent remains in place if it hits the wall. Thus, the horizontal position of the agent $x_t \in \mathbb N$ is determined by its previous action sequence $a_{1:t-1}$. 

In a Passive T-Maze $\mathcal M_{\text{passive}}$, $O = S$, allowing the agent to observe $G$ initially, and $L=T-1$.
Rewards are given by $R_t(h_{1:t},a_t)=\frac{\mathbf 1(x_{t+1} \ge t) -1}{T-1}$ for $t \le T-1$, and $R_T(h_{1:T},a_{T})=\mathbf 1(o_{T+1} = G)$.  
The unique optimal policy $\pi^*$ moves right for $T-1$ steps, then towards $G$, yielding an expected return of $1.0$. 
The optimal Markovian policy can only guess the goal position, ending with an expected return of $0.5$. 
The worst policy has an expected return of $-1.0$. 
\end{example}

\begin{example}[\textbf{Active T-Maze}]
\label{eg:tmaze_active}
In an Active T-Maze $\mathcal M_{\text{active}}$, $O$ is one step left of $S$ and $L=T-2$. The unique optimal policy moves left to reach $O$, then right to $J$. The rewards $R_1 = 0$, $R_t(h_{1:t},a_t)=\frac{\mathbf 1(x_{t+1} \ge t-1) -1}{T-2}$ for $2 \le t\le T-1$, and $R_T(h_{1:T},a_{T})=\mathbf 1(o_{T+1} = G)$. The optimal Markovian and worst policies have the same expected returns as in the passive setting. 
\end{example}

$\mathcal M_{\text{passive}}$ is specially designed for testing memory only. To make the final decision, the optimal policy must recall $G$, observed in the first step. All the memory lengths are $T$ as the terminal reward depends on initial observation and observation at $J$ depends on the previous action sequence. The credit assignment length is $1$ as immediate penalties occur for suboptimal actions. 
In contrast, $\mathcal M_{\text{active}}$ also requires a credit assignment length of $T$, as the initial optimal action $\mathcal L$ only affects $n$-step rewards when $n=T$.
Passive and Active T-Mazes are suitable for evaluating the ability of pure memory, and both memory and credit assignment in RL, as they require little exploration due to the dense rewards before reaching the goal.

\section{Related Work}

Memory and credit assignment have been extensively studied in RL, with representative works shown in Table~\ref{tab:benchmarks}. The work most closely related to ours is \texttt{bsuite}~\citep{osband2019behaviour}, which also disentangles memory and credit assignment in their benchmark. \texttt{bsuite} is also configurable with regard to the memory and credit assignment lengths to evaluate an agent's capability. 
Our work extends~\citet{osband2019behaviour} by providing formal definitions of memory and credit assignment lengths in RL for task design and evaluation. Our definitions enable us to analyze existing POMDP benchmarks. 

Several prior works have assessed memory~\citep{parisotto2020stabilizing,esslinger2022deep,chen2022transdreamer,pleines2023memory,morad2023popgym} in the context of Transformers in RL. However, these prior methods use tasks that require (relatively short) context lengths between $50$ and $831$. %
In addition, there are two separate lines of work using Transformers for better credit assignment. One line of work~\citep{liu2019sequence,ferret2019self} has developed algorithms that train Transformers to predict rewards for return redistribution.
The other line of work~\citep{chen2021decision,zheng2022online} proposes return-conditioned agents trained on offline RL datasets.  
Both lines are beyond the scope of canonical online RL algorithms that purely rely on TD learning, which we focus on.

\section{Evaluating Memory-Based RL Algorithms}
\label{sec:experiments}

\begin{figure}[t]
    \centering
    \includegraphics[width=0.49\linewidth]{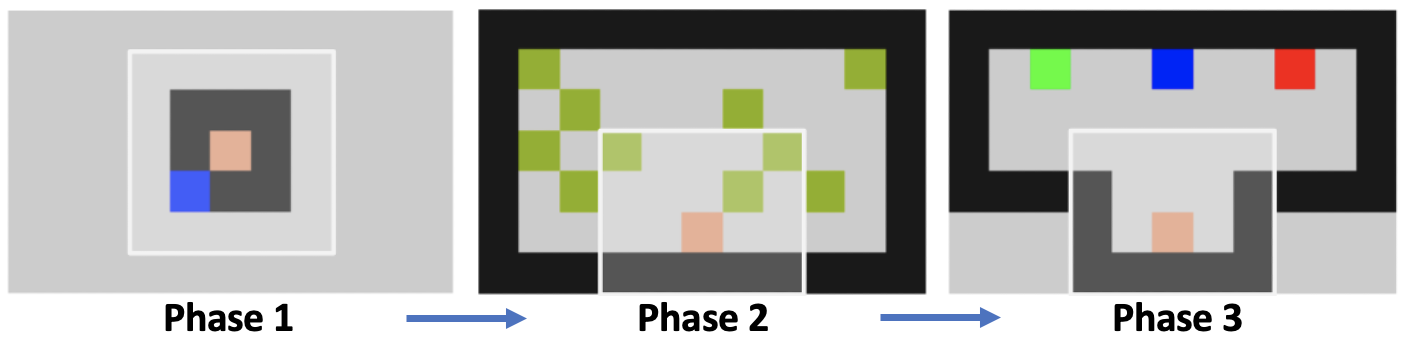}
    \includegraphics[width=0.49\linewidth]{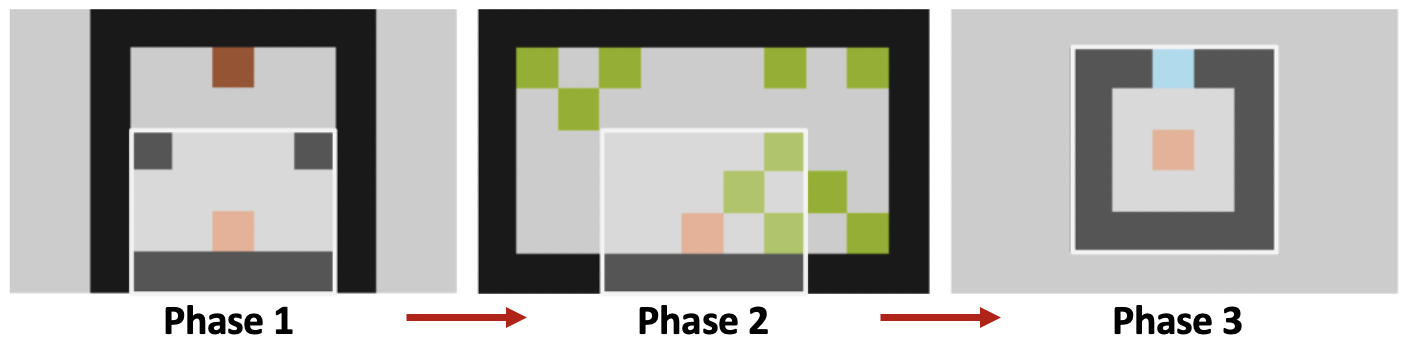}
    \caption{\textbf{Pixel-based tasks, Passive Visual Match (left) and Key-to-Door (right), evaluated in our experiments.} Each task has three phases, where we adjust the length of Phase 2 and keep the lengths of the others constant.
    The agent (in beige) only observes nearby grids (within white borders) and cannot pass through the walls (in black). In Passive Visual Match, the agent observes a randomized target color (blue in this case) in Phase 1, collects apples (in green) in Phase 2, and reaches the grid with the matching color in Phase 3. In Key-to-Door, the agent reaches the key (in brown) in Phase 1, collects apples in Phase 2, and reaches the door (in cyan) in Phase 3.   }
    \label{fig:pixel-envs}
\end{figure}

In this section, we evaluate the memory and credit assignment capabilities of memory-based RL algorithms. 
Our first experiment aims to evaluate the memory abilities by disentangling memory from credit assignment, using our Passive T-Maze and the Passive Visual Match~\citep{hung2019optimizing} tasks in Sec.~\ref{sec:passive}. 
Next, we test the credit assignment abilities of memory-based RL in our Active T-Maze and the Key-to-Door~\citep{raposo2021synthetic} tasks in Sec.~\ref{sec:active}.
Passive Visual Match and Key-to-Door tasks (demonstrated in Fig.~\ref{fig:pixel-envs}) can be viewed as pixel-based versions of Passive and Active T-Mazes, respectively.
Lastly, we evaluate memory-based RL on standard POMDP benchmarks that only require short-term dependencies in Sec.~\ref{sec:short}, with a focus on their sample efficiency. \looseness=-1

We focus on model-free RL agents that take observation and action \emph{sequences} as input, which serves as a simple baseline that can achieve excellent results in many tasks~\citep{ni2021recurrent}.
The agent architecture is based on the codebase from \citet{ni2021recurrent}, consisting of observation and action embedders, a sequence model, and actor-critic heads. 
We compare agents with LSTM and Transformer (GPT-2~\citep{radford2019language}) architectures.
For a fair comparison, we use the same hidden state size ($128$) and tune the number of layers (varying from $1,2,4$) for both LSTM and Transformer architectures. 
Since our tasks all require value memory lengths to be the full horizon, we set the \textit{context lengths} to the full horizon $T$.

For the T-Maze tasks, we use DDQN~\citep{van2016deep} with epsilon-greedy exploration as the RL algorithm to better control the exploration strategy.
In pixel-based tasks, we use CNNs as observation embedders and SAC-Discrete~\citep{christodoulou2019soft} as RL algorithm following \citet{ni2021recurrent}.  We train all agents to convergence or at most 10M steps on these tasks. We provide training details and learning curves in Appendix~\ref{app:exp_details}. 

\subsection{Transformers Shine in Pure Long-Term Memory Tasks}
\label{sec:passive}

\begin{figure}[h]
    \centering
    \vspace{-1em}
    \includegraphics[width=0.49\linewidth]{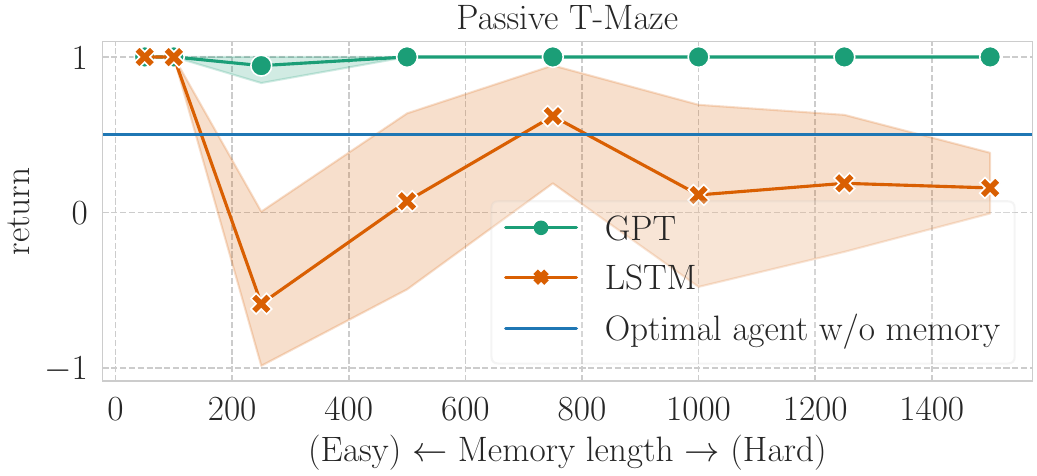}
    \includegraphics[width=0.49\linewidth]{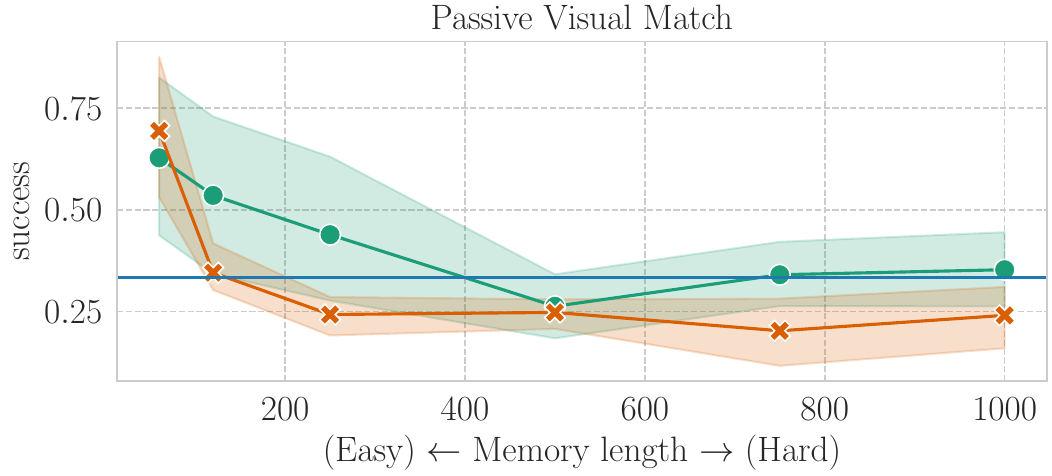}
    \vspace{-0.5em}
    \caption{\textbf{Transformer-based RL outperforms LSTM-based RL in tasks (purely) requiring long-term memory.} \textbf{Left:} results in Passive T-Mazes with varying memory lengths from $50$ to $1500$; \textbf{Right:} results in Passive Visual Match with varying memory lengths from $60$ to $1000$. We also show the performance of the optimal Markovian policies. Each data point in the figure represents the final performance of an agent with the memory length indicated on the x-axis. All the figures in this paper show the mean and its 95\% confidence interval over 10 seeds. \looseness=-1}
    \label{fig:passive}
\end{figure}

First, we provide evidence that Transformers can indeed enhance long-term memory in tasks that purely test memory, Passive T-Mazes. In Fig.~\ref{fig:passive} (left), Transformer-based agents consistently solve the task requiring memory lengths from $50$ up to $1500$. To the best of our knowledge, this achievement may well set a new record for the longest memory length of a task that an RL agent can solve. In contrast, LSTM-based agents start to falter at a memory length of $250$, underperforming the optimal Markovian policies with a return of $0.5$. Both architectures use a single layer, yielding the best performance. 
We also provide detailed learning curves for tasks requiring long memory lengths in Fig.~\ref{fig:passive-tmaze}. Transformer-based agent also shows much greater stability and sample efficiency across seeds when compared to their LSTM counterparts. 

On the other hand, the pixel-based Passive Visual Match~\citep{hung2019optimizing} has a more complex reward function: zero rewards for observing the color in Phase 1, immediate rewards for apple picking in Phase 2 requiring short-term dependency, and a final bonus for reaching the door of the matching color, requiring long-term memory. The success rate solely depends on the final bonus. 
The ratio of the final bonus \wrt the total rewards is inversely proportional to the episode length, as the number of apples increases over time in Phase 2. Thus, the low ratio makes learning long-term memory challenging for this task. 
As shown in Fig.~\ref{fig:passive} (right), Transformer-based agents slightly outperform LSTM-based ones\footnote{It is worth noting that at the memory length of $1000$, LSTM-based agents encountered NaN gradient issues in around $30$\% of the runs, which were not shown in the plots.} in success rates when the tasks require a memory length over $120$, yet their success rates are close to that of the optimal Markovian policy (around $1/3$).
Interestingly, Transformer-based agents achieve much higher returns with more apples collected, shown in Fig.~\ref{fig:passive-visual-match}. This indicates Transformers can help short-term memory (credit assignment) in some tasks. 

\begin{figure}[t]
    \centering
    \includegraphics[width=0.24\linewidth]{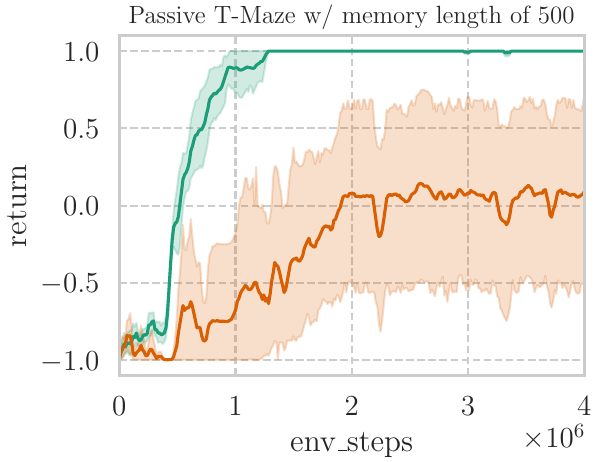}
    \includegraphics[width=0.24\linewidth]{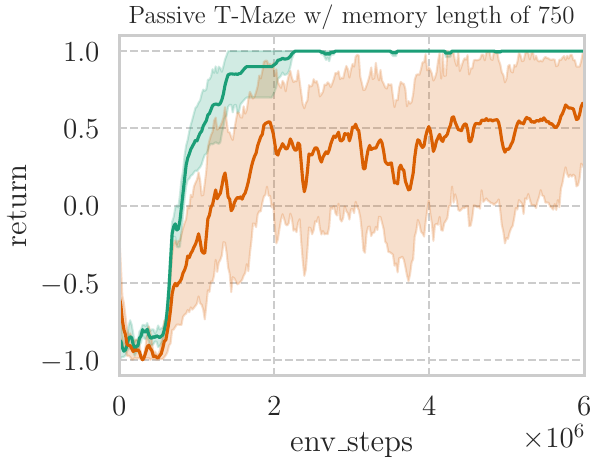}
    \includegraphics[width=0.24\linewidth]{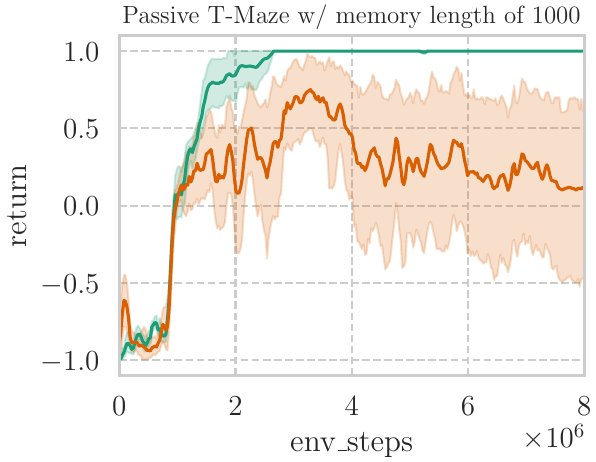}
    \includegraphics[width=0.24\linewidth]{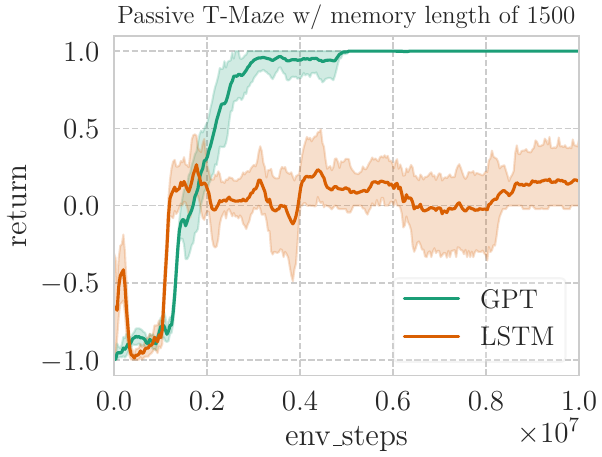}
    \vspace{-0.4em}
    \caption{\textbf{Transformer-based agent can solve long-term memory low-dimensional tasks with good sample efficiency.} In Passive T-Maze with long memory length (from left to right: 500, 750, 1000, 1500), Transformer-based agent is significantly better than LSTM-based agent.}
    \label{fig:passive-tmaze}
\end{figure}

\begin{figure}[t]
    \centering
    \includegraphics[width=0.24\linewidth]{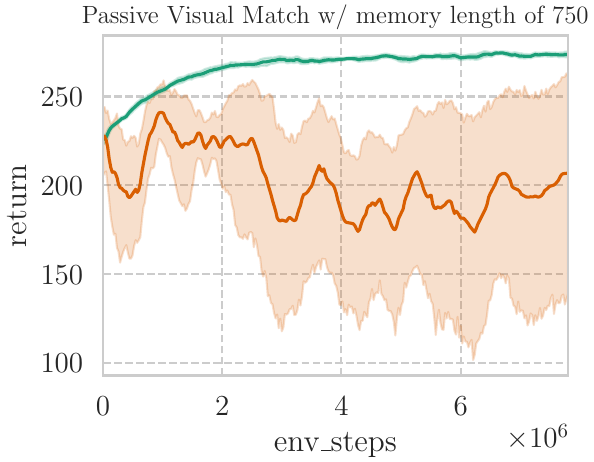}
    \includegraphics[width=0.24\linewidth]{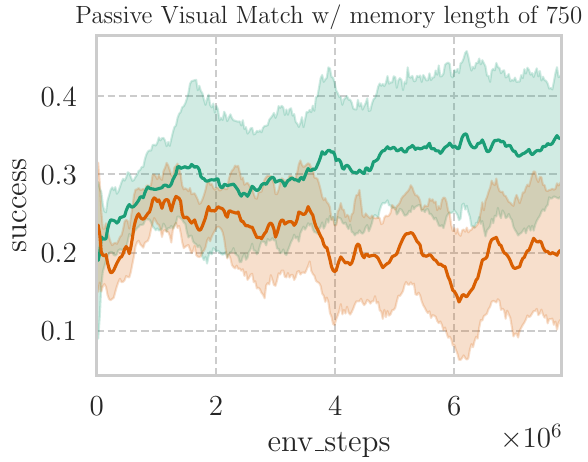}
    \includegraphics[width=0.24\linewidth]{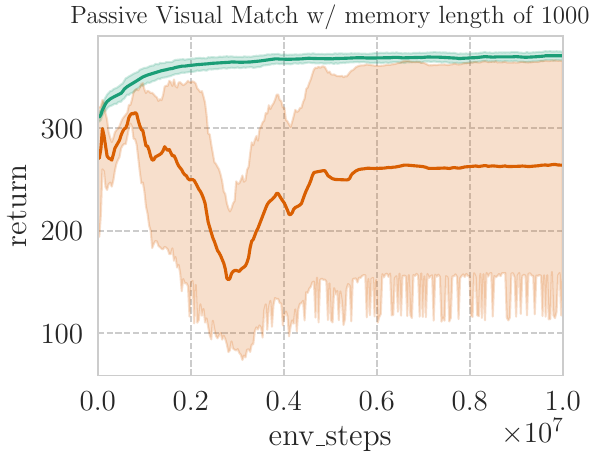}
    \includegraphics[width=0.24\linewidth]{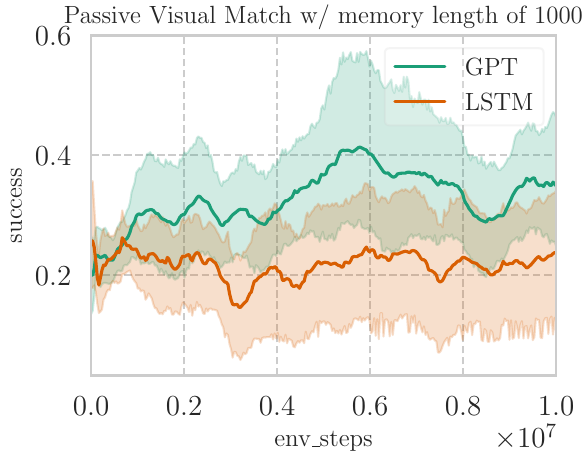}
    \vspace{-0.4em}
    \caption{\textbf{Transformer-based agent is more sample-efficient in long-term memory high-dimensional tasks.} 
    In Passive Visual Match with long memory length (750 on the left; 1000 on the right), Transformer-based agents show a slight advantage over LSTM-based agents in the memory capability (as measured by \textbf{success rate}), while being much higher in total rewards (as measured by \textbf{return}).\looseness=-1}
    \label{fig:passive-visual-match}
\end{figure}

\subsection{Transformers Enhance Temporal Credit Assignment in RL, but not Long-Term}
\label{sec:active}

Next, we present evidence that Transformers do not enhance long-term credit assignment for memory-based RL. As illustrated in Fig.~\ref{fig:active} (left), in Active T-Mazes, by merely shifting the Oracle one step to the left from the starting position as compared to Passive T-Mazes, 
both Transformer-based and LSTM-based agents fail to solve the task when the credit assignment (and memory) length extends to just $250$. 
The failure of Transformer-based agents on Active T-Mazes suggests the bottleneck of credit assignment. This is because they possess sufficient memory capabilities to solve this task (as evidenced by their success in Passive T-Mazes), and they can reach the junction state without exploration issues (as evidenced by their return of more than $0.0$). 
A similar trend is observed in pixel-based Key-to-Door tasks~\citep{raposo2021synthetic}, where both Transformer-based and LSTM-based agents struggle to solve the task with a credit assignment length of $250$, as indicated by the low success rates in Fig.~\ref{fig:active} (right). 
Nevertheless, we observe that Transformers can improve credit assignment, especially when the length is medium around $100$ in both tasks, compared to LSTMs.

On the other hand, we investigate the performance of multi-layer Transformers on credit assignment tasks to explore the potential scaling law in RL, as discovered in supervised learning~\citep{kaplan2020scaling}. 
Fig.~\ref{fig:scale_gpt} reveals that multi-layer Transformers greatly enhance performance in tasks with credit assignment lengths up to $100$ in Active T-Maze and $120$ in Key-to-Door. Yet they fail to improve long-term credit assignment. 
Interestingly, $4$-layer Transformers have similar or worse performance compared to $2$-layer Transformers, suggesting that the scaling law might not be seamlessly applied to credit assignment in RL.

\begin{figure}[t]
    \centering
    \vspace{-1em}
    \includegraphics[width=0.49\linewidth]{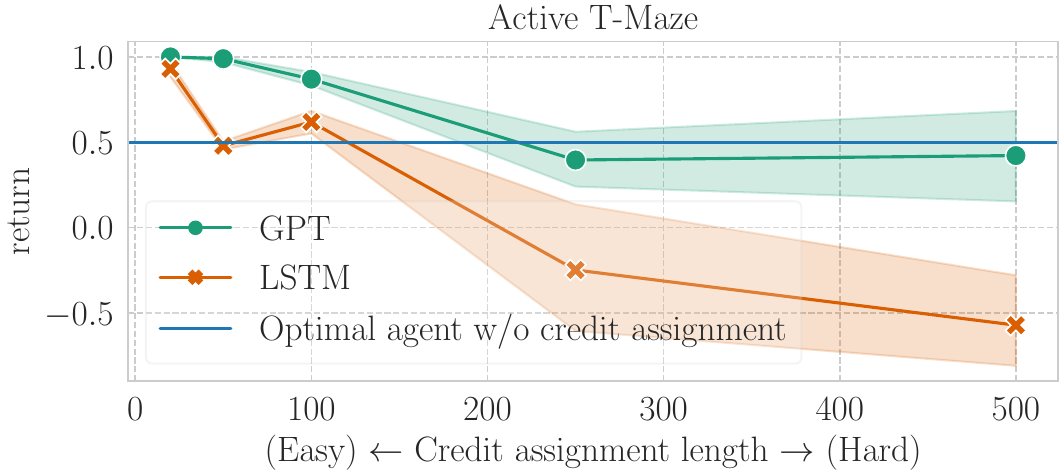}
    \includegraphics[width=0.49\linewidth]{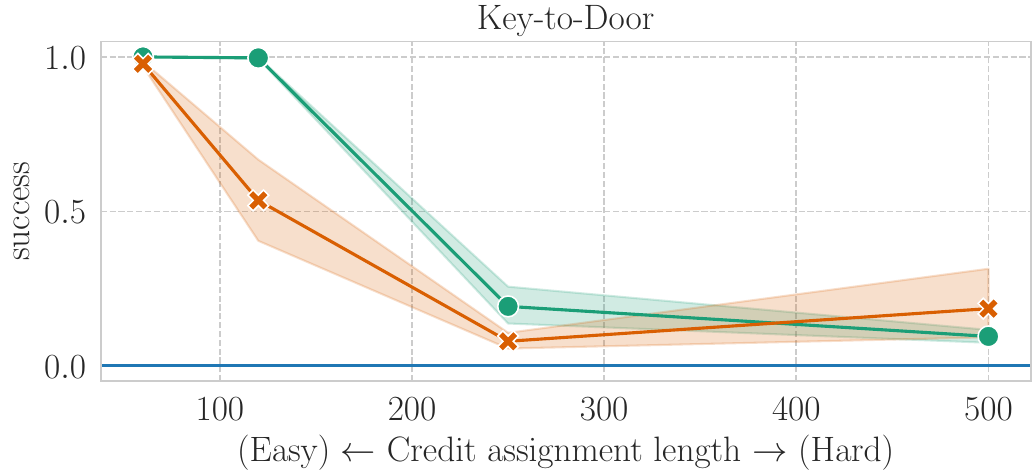}
    \vspace{-0.5em}
    \caption{\textbf{Transformer-based RL improves temporal credit assignment compared to LSTM-based RL, but its advantage diminishes in long-term scenarios.} \textbf{Left:} results in Active T-Mazes with varying credit assignment lengths from $20$ to $500$; \textbf{Right:} results in Key-to-Door with varying credit assignment lengths from $60$ to $500$. We also show the performance of the optimal policies that lack (long-term) credit assignment -- taking greedy actions to maximize \textit{immediate} rewards. Each data point in the figure represents the final performance of an agent in a task with the credit assignment length indicated on the x-axis, averaged over 10 seeds.}
    \label{fig:active}
    \vspace{-1em}
\end{figure}

\begin{figure}[h]
    \centering
\includegraphics[width=0.23\linewidth]{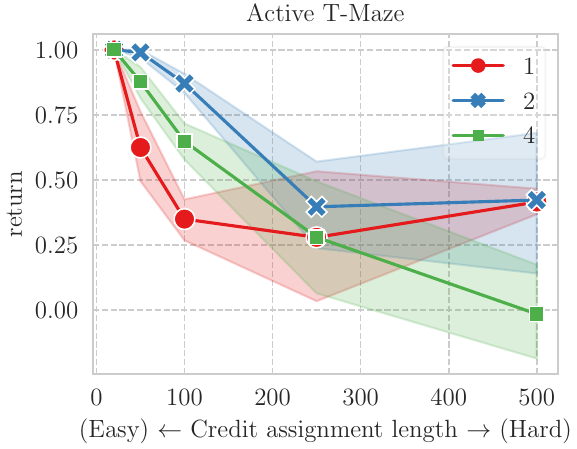}
\includegraphics[width=0.23\linewidth]{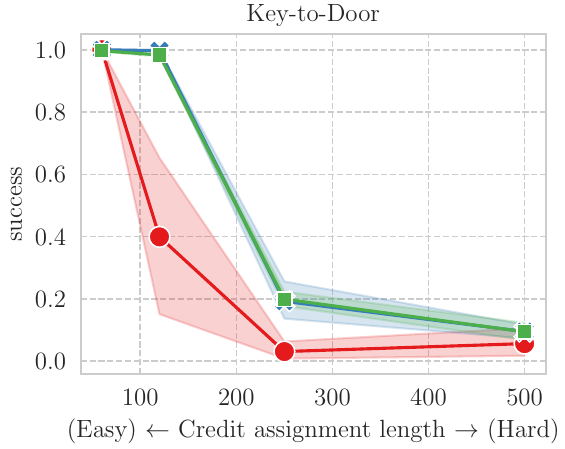}
\includegraphics[width=0.24\linewidth]{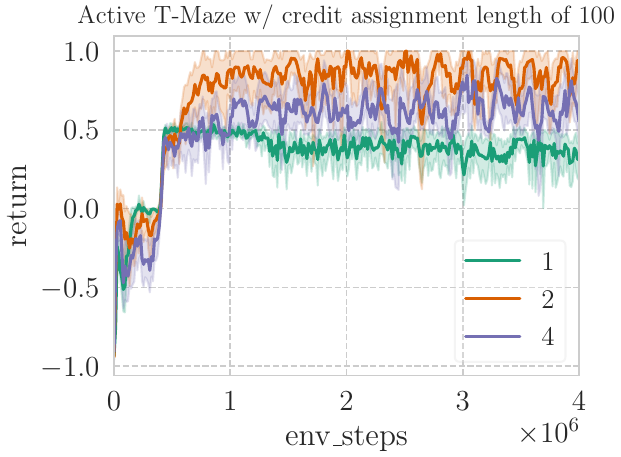}
\includegraphics[width=0.24\linewidth]{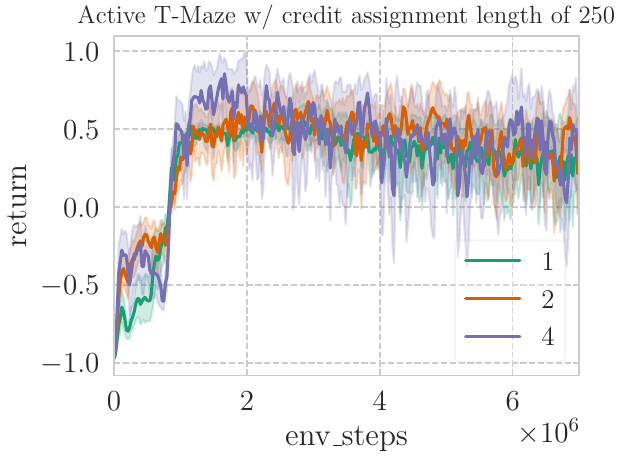}

    \caption{\textbf{Increasing the number of layers (and heads) in Transformers aids in temporal, but not long-term, credit assignment.} The left two figures show the final results from Active T-Maze and Key-to-Door, varying the number of layers and heads in Transformers from 1, 2 to 4. The right two figures show the associated learning curves in Active T-Maze with different credit assignment lengths.\looseness=-1 }
    \label{fig:scale_gpt}
    \vspace{-1em}
\end{figure}

\subsection{Transformers in RL Raise Sample Complexity in Certain Short-Term Memory Tasks}
\label{sec:short}

Lastly, we revisit the standard POMDP benchmark used in prior works~\citep{ni2021recurrent,han2019variational,meng2021memory}. 
This PyBullet benchmark consists of tasks where only the positions (referred to as ``-P'' tasks) or velocities (referred to as ``-V'' tasks) of the agents are revealed, with dense rewards. As a result, they necessitate short-term credit assignment. Further, the missing velocities can be readily inferred from two successive positions, and the missing positions can be approximately deduced by recent velocities, thereby requiring short-term memory. 
Adhering to the implementation of recurrent TD3~\citep{ni2021recurrent} and substituting LSTMs with Transformers, we investigate their sample efficiency over 1.5M steps. We find that in most tasks, Transformer-based agents show worse sample efficiency than LSTM-based agents, as displayed in Fig.~\ref{fig:pybullet}. This aligns with the findings in POPGym paper~\citep{morad2023popgym} where GRU-based PPO is more sample-efficient than Transformer-based PPO across POPGym benchmark that requires relatively short memory lengths. 
These observations are also consistent with the known tendency of Transformers to thrive in situations with large datasets in supervised learning.

\begin{figure}[t]
    \centering
    \includegraphics[width=0.24\linewidth]{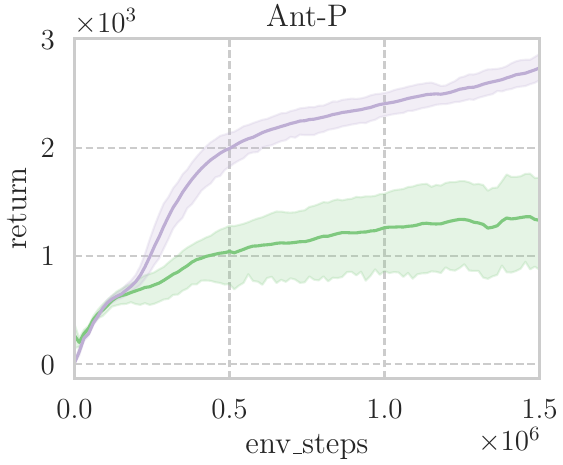}
    \includegraphics[width=0.25\linewidth]{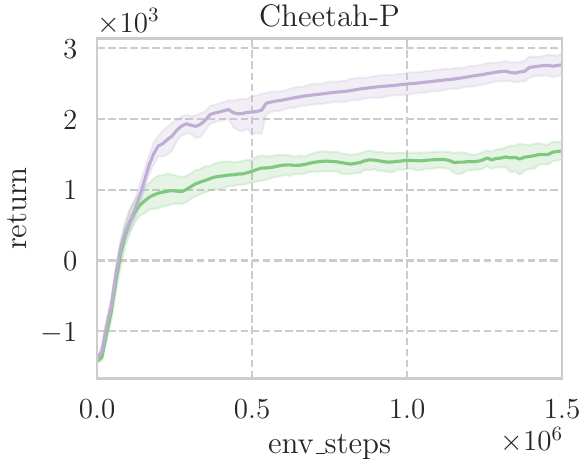}
    \includegraphics[width=0.24\linewidth]{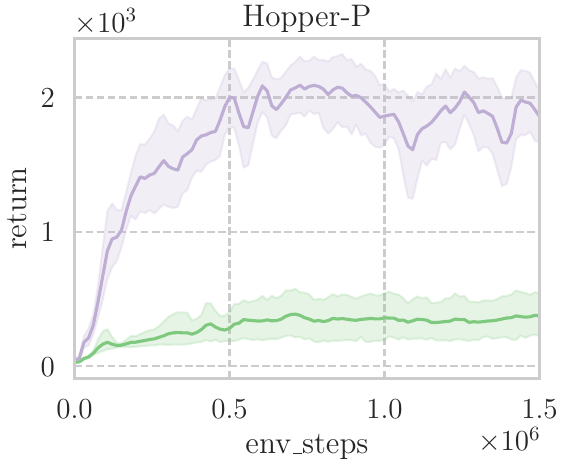}
    \includegraphics[width=0.25\linewidth]{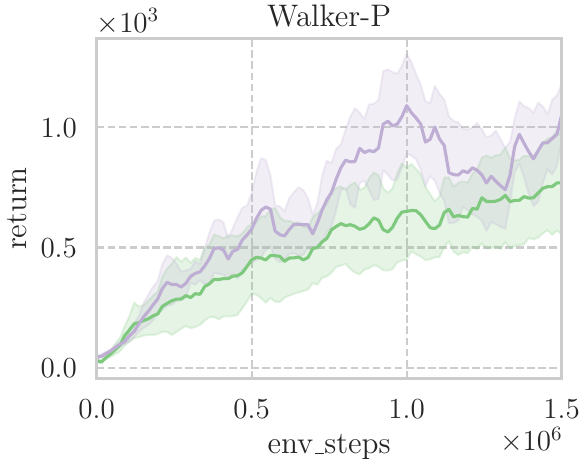}
    \includegraphics[width=0.25\linewidth]{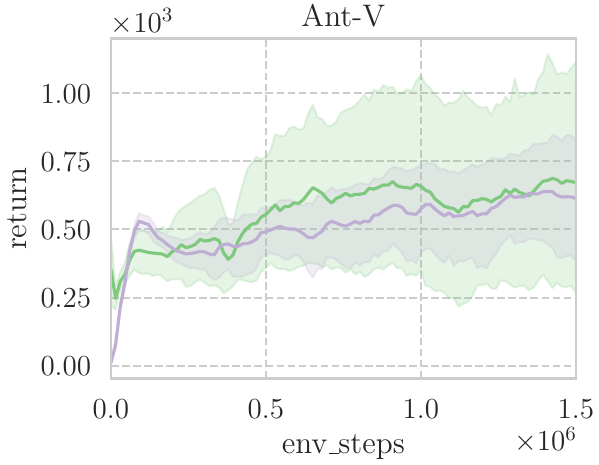}
    \includegraphics[width=0.24\linewidth]{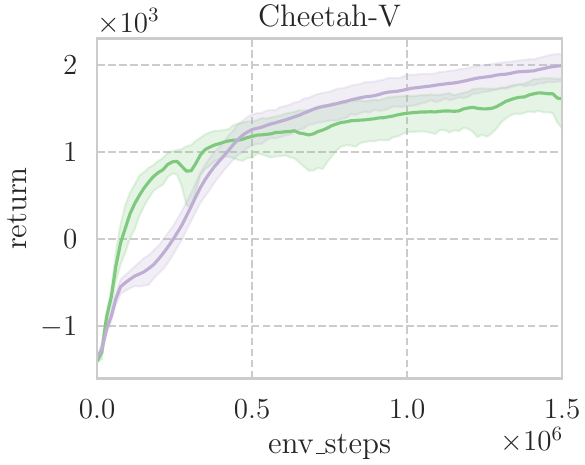}
    \includegraphics[width=0.24\linewidth]{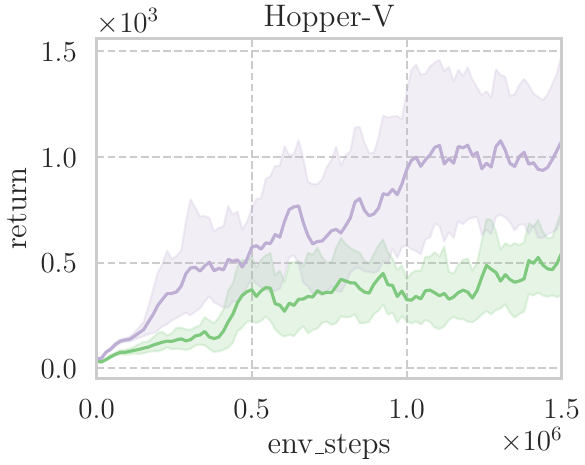}
    \includegraphics[width=0.23\linewidth]{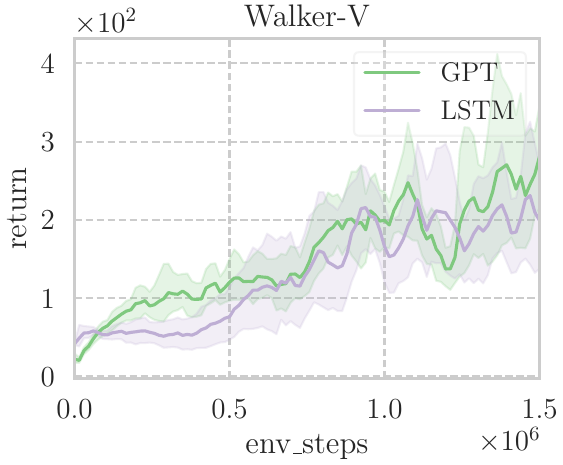}
    \caption{\textbf{Transformer-based RL is sample-inefficient compared to LSTM-based RL in most PyBullet occlusion tasks.} The top row shows tasks with occluded velocities, while the bottom row shows tasks with occluded positions. Results are averaged over 10 seeds.}
    \label{fig:pybullet}
\end{figure}

\section{Limitations}
We provide quantitative definitions of memory and credit assignment lengths, but developing programs capable of calculating them for any given task remains an open challenge. This can often be desirable since sequence models can be expensive to train in RL agents.
The worst compute complexity of $\creditlen$ can be 
$O(|\mathcal S| |\mathcal A| T^2)$ for a finite MDP, while the complexity of $\memlen$ can be $O((|\mathcal O| +|\mathcal A|)^T |\mathcal O||\mathcal A| T)$ for a finite POMDP.
In addition, $\creditlen$ does not take the variance of intermediate rewards into account. These reward noises have been identified to complicate the learning process for credit assignment~\citep{mesnard2020counterfactual}. 
On the other hand, $\memlen$ does not take into account the memory capacity (\ie, the number of bits) and robustness to observation noise~\citep{beck2019amrl}.

On the empirical side, we evaluate Transformers on a specific architecture (GPT-2) with a relatively small size in the context of online model-free RL. As a future research direction, it would be interesting to explore how various sequence architectures and history abstraction methods may impact and potentially improve long-term memory and/or credit assignment. 

\section{Conclusion}

In this study, we evaluate the memory and credit assignment capabilities of memory-based RL agents, with a focus on Transformer-based RL. 
While Transformer-based agents excel in tasks requiring long-term memory, they do not improve long-term credit assignment and generally have poor sample efficiency. 
Furthermore, we highlighted that many existing RL tasks, even those designed to evaluate (long-term) memory or credit assignment, often intermingle both capabilities or require only short-term dependencies.
While Transformers are powerful tools in RL, especially for long-term memory tasks, they are not a universal solution to all RL challenges. Our results underscore the ongoing need for careful task design and the continued advancement of core RL algorithms. 

\ifsubmission
\else
\section*{Acknowledgements}

This research was enabled by the computational resources provided by the Calcul Québec (\url{calculquebec.ca}) and the Digital Research Alliance of Canada (\url{alliancecan.ca}).
We thank Pierluca D'Oro and Zhixuan Lin for their technical help. We thank Jurgis Pasukonis and Zhixuan Lin for correcting some factual errors. We also thank anonymous reviewers, Pierluca D'Oro, Siddhant Sahu, Giancarlo Kerg, Guillaume Lajoie, and Ryan D'Orazio for constructive discussion. 
\fi

{\small
\bibliography{citation}

\begin{thebibliography}{67}
\providecommand{\natexlab}[1]{#1}
\providecommand{\url}[1]{\texttt{#1}}
\expandafter\ifx\csname urlstyle\endcsname\relax
  \providecommand{\doi}[1]{doi: #1}\else
  \providecommand{\doi}{doi: \begingroup \urlstyle{rm}\Url}\fi

\bibitem[Agarwal et~al.(2023)Agarwal, Rahman, St-Charles, Prince, and
  Kahou]{agarwal2023transformers}
P.~Agarwal, A.~A. Rahman, P.-L. St-Charles, S.~J. Prince, and S.~E. Kahou.
\newblock Transformers in reinforcement learning: A survey.
\newblock \emph{arXiv preprint arXiv:2307.05979}, 2023.

\bibitem[Arjona{-}Medina et~al.(2019)Arjona{-}Medina, Gillhofer, Widrich,
  Unterthiner, Brandstetter, and Hochreiter]{arjona2019rudder}
J.~A. Arjona{-}Medina, M.~Gillhofer, M.~Widrich, T.~Unterthiner,
  J.~Brandstetter, and S.~Hochreiter.
\newblock {RUDDER:} return decomposition for delayed rewards.
\newblock In \emph{Advances in Neural Information Processing Systems 32: Annual
  Conference on Neural Information Processing Systems 2019, NeurIPS 2019,
  December 8-14, 2019, Vancouver, BC, Canada}, 2019.

\bibitem[Bakker(2001)]{bakker2001reinforcement}
B.~Bakker.
\newblock Reinforcement learning with long short-term memory.
\newblock In \emph{Advances in Neural Information Processing Systems 14 [Neural
  Information Processing Systems: Natural and Synthetic, {NIPS} 2001, December
  3-8, 2001, Vancouver, British Columbia, Canada]}, 2001.

\bibitem[Beck et~al.(2019)Beck, Ciosek, Devlin, Tschiatschek, Zhang, and
  Hofmann]{beck2019amrl}
J.~Beck, K.~Ciosek, S.~Devlin, S.~Tschiatschek, C.~Zhang, and K.~Hofmann.
\newblock Amrl: Aggregated memory for reinforcement learning.
\newblock In \emph{International Conference on Learning Representations}, 2019.

\bibitem[Bellemare et~al.(2013)Bellemare, Naddaf, Veness, and
  Bowling]{bellemare13arcade}
M.~G. Bellemare, Y.~Naddaf, J.~Veness, and M.~Bowling.
\newblock The arcade learning environment: An evaluation platform for general
  agents.
\newblock \emph{J. Artif. Intell. Res.}, 2013.

\bibitem[Blankenship(1938)]{blankenship1938memory}
A.~B. Blankenship.
\newblock Memory span: a review of the literature.
\newblock \emph{Psychological Bulletin}, 35\penalty0 (1):\penalty0 1, 1938.

\bibitem[Chen et~al.(2022)Chen, Wu, Yoon, and Ahn]{chen2022transdreamer}
C.~Chen, Y.~Wu, J.~Yoon, and S.~Ahn.
\newblock Transdreamer: Reinforcement learning with transformer world models.
\newblock \emph{arXiv preprint arXiv:2202.09481}, 2022.

\bibitem[Chen et~al.(2021)Chen, Lu, Rajeswaran, Lee, Grover, Laskin, Abbeel,
  Srinivas, and Mordatch]{chen2021decision}
L.~Chen, K.~Lu, A.~Rajeswaran, K.~Lee, A.~Grover, M.~Laskin, P.~Abbeel,
  A.~Srinivas, and I.~Mordatch.
\newblock Decision transformer: Reinforcement learning via sequence modeling.
\newblock In \emph{Advances in Neural Information Processing Systems 34: Annual
  Conference on Neural Information Processing Systems 2021, NeurIPS 2021,
  December 6-14, 2021, virtual}, 2021.

\bibitem[Chevalier-Boisvert et~al.(2018)Chevalier-Boisvert, Willems, and
  Pal]{gym_minigrid}
M.~Chevalier-Boisvert, L.~Willems, and S.~Pal.
\newblock Minimalistic gridworld environment for openai gym.
\newblock \url{https://github.com/maximecb/gym-minigrid}, 2018.

\bibitem[Christodoulou(2019)]{christodoulou2019soft}
P.~Christodoulou.
\newblock Soft actor-critic for discrete action settings.
\newblock \emph{arXiv preprint arXiv:1910.07207}, 2019.

\bibitem[Dempster(1981)]{dempster1981memory}
F.~N. Dempster.
\newblock Memory span: Sources of individual and developmental differences.
\newblock \emph{Psychological Bulletin}, 89\penalty0 (1):\penalty0 63, 1981.

\bibitem[Derman et~al.(2021)Derman, Dalal, and Mannor]{derman2021acting}
E.~Derman, G.~Dalal, and S.~Mannor.
\newblock Acting in delayed environments with non-stationary markov policies.
\newblock In \emph{9th International Conference on Learning Representations,
  {ICLR} 2021, Virtual Event, Austria, May 3-7, 2021}, 2021.

\bibitem[Esslinger et~al.(2022)Esslinger, Platt, and Amato]{esslinger2022deep}
K.~Esslinger, R.~Platt, and C.~Amato.
\newblock Deep transformer q-networks for partially observable reinforcement
  learning.
\newblock \emph{arXiv preprint arXiv:2206.01078}, 2022.

\bibitem[Ferret et~al.(2020)Ferret, Marinier, Geist, and
  Pietquin]{ferret2019self}
J.~Ferret, R.~Marinier, M.~Geist, and O.~Pietquin.
\newblock Self-attentional credit assignment for transfer in reinforcement
  learning.
\newblock In \emph{Proceedings of the Twenty-Ninth International Joint
  Conference on Artificial Intelligence, {IJCAI} 2020}, 2020.

\bibitem[Fortunato et~al.(2019)Fortunato, Tan, Faulkner, Hansen, Badia,
  Buttimore, Deck, Leibo, and Blundell]{fortunato2019generalization}
M.~Fortunato, M.~Tan, R.~Faulkner, S.~Hansen, A.~P. Badia, G.~Buttimore,
  C.~Deck, J.~Z. Leibo, and C.~Blundell.
\newblock Generalization of reinforcement learners with working and episodic
  memory.
\newblock In \emph{Advances in Neural Information Processing Systems 32: Annual
  Conference on Neural Information Processing Systems 2019, NeurIPS 2019,
  December 8-14, 2019, Vancouver, BC, Canada}, 2019.

\bibitem[Fujimoto et~al.(2018)Fujimoto, van Hoof, and
  Meger]{fujimoto2018addressing}
S.~Fujimoto, H.~van Hoof, and D.~Meger.
\newblock Addressing function approximation error in actor-critic methods.
\newblock In \emph{Proceedings of the 35th International Conference on Machine
  Learning, {ICML} 2018, Stockholmsm{\"{a}}ssan, Stockholm, Sweden, July 10-15,
  2018}, 2018.

\bibitem[Gershman and Daw(2017)]{gershman2017reinforcement}
S.~J. Gershman and N.~D. Daw.
\newblock Reinforcement learning and episodic memory in humans and animals: an
  integrative framework.
\newblock \emph{Annual review of psychology}, 68:\penalty0 101--128, 2017.

\bibitem[Grigsby et~al.(2023)Grigsby, Fan, and Zhu]{grigsby2023amago}
J.~Grigsby, L.~Fan, and Y.~Zhu.
\newblock Amago: Scalable in-context reinforcement learning for adaptive
  agents.
\newblock \emph{arXiv preprint arXiv:2310.09971}, 2023.

\bibitem[Han et~al.(2020)Han, Doya, and Tani]{han2019variational}
D.~Han, K.~Doya, and J.~Tani.
\newblock Variational recurrent models for solving partially observable control
  tasks.
\newblock In \emph{8th International Conference on Learning Representations,
  {ICLR} 2020, Addis Ababa, Ethiopia, April 26-30, 2020}, 2020.

\bibitem[Hochreiter and Schmidhuber(1997)]{hochreiter1997long}
S.~Hochreiter and J.~Schmidhuber.
\newblock Long short-term memory.
\newblock \emph{Neural computation}, 9\penalty0 (8):\penalty0 1735--1780, 1997.

\bibitem[Humplik et~al.(2019)Humplik, Galashov, Hasenclever, Ortega, Teh, and
  Heess]{humplik2019meta}
J.~Humplik, A.~Galashov, L.~Hasenclever, P.~A. Ortega, Y.~W. Teh, and N.~Heess.
\newblock Meta reinforcement learning as task inference.
\newblock \emph{arXiv preprint arXiv:1905.06424}, 2019.

\bibitem[Hung et~al.(2018)Hung, Lillicrap, Abramson, Wu, Mirza, Carnevale,
  Ahuja, and Wayne]{hung2019optimizing}
C.~Hung, T.~P. Lillicrap, J.~Abramson, Y.~Wu, M.~Mirza, F.~Carnevale, A.~Ahuja,
  and G.~Wayne.
\newblock Optimizing agent behavior over long time scales by transporting
  value.
\newblock \emph{arXiv preprint arXiv:1810.06721}, 2018.

\bibitem[Ichter et~al.(2022)Ichter, Brohan, Chebotar, Finn, Hausman, Herzog,
  Ho, Ibarz, Irpan, Jang, Julian, Kalashnikov, Levine, Lu, Parada, Rao,
  Sermanet, Toshev, Vanhoucke, Xia, Xiao, Xu, Yan, Brown, Ahn, Cortes, Sievers,
  Tan, Xu, Reyes, Rettinghouse, Quiambao, Pastor, Luu, Lee, Kuang, Jesmonth,
  Joshi, Jeffrey, Ruano, Hsu, Gopalakrishnan, David, Zeng, and Fu]{ahn2022can}
B.~Ichter, A.~Brohan, Y.~Chebotar, C.~Finn, K.~Hausman, A.~Herzog, D.~Ho,
  J.~Ibarz, A.~Irpan, E.~Jang, R.~Julian, D.~Kalashnikov, S.~Levine, Y.~Lu,
  C.~Parada, K.~Rao, P.~Sermanet, A.~Toshev, V.~Vanhoucke, F.~Xia, T.~Xiao,
  P.~Xu, M.~Yan, N.~Brown, M.~Ahn, O.~Cortes, N.~Sievers, C.~Tan, S.~Xu,
  D.~Reyes, J.~Rettinghouse, J.~Quiambao, P.~Pastor, L.~Luu, K.~Lee, Y.~Kuang,
  S.~Jesmonth, N.~J. Joshi, K.~Jeffrey, R.~J. Ruano, J.~Hsu, K.~Gopalakrishnan,
  B.~David, A.~Zeng, and C.~K. Fu.
\newblock Do as {I} can, not as {I} say: Grounding language in robotic
  affordances.
\newblock In \emph{Conference on Robot Learning, CoRL 2022, 14-18 December
  2022, Auckland, New Zealand}, 2022.

\bibitem[Janner et~al.(2021)Janner, Li, and Levine]{janner2021reinforcement}
M.~Janner, Q.~Li, and S.~Levine.
\newblock Offline reinforcement learning as one big sequence modeling problem.
\newblock In \emph{Advances in Neural Information Processing Systems 34: Annual
  Conference on Neural Information Processing Systems 2021, NeurIPS 2021,
  December 6-14, 2021, virtual}, 2021.

\bibitem[Kaelbling et~al.(1998)Kaelbling, Littman, and
  Cassandra]{kaelbling1998planning}
L.~P. Kaelbling, M.~L. Littman, and A.~R. Cassandra.
\newblock Planning and acting in partially observable stochastic domains.
\newblock \emph{Artif. Intell.}, 1998.

\bibitem[Kaplan et~al.(2020)Kaplan, McCandlish, Henighan, Brown, Chess, Child,
  Gray, Radford, Wu, and Amodei]{kaplan2020scaling}
J.~Kaplan, S.~McCandlish, T.~Henighan, T.~B. Brown, B.~Chess, R.~Child,
  S.~Gray, A.~Radford, J.~Wu, and D.~Amodei.
\newblock Scaling laws for neural language models.
\newblock \emph{arXiv preprint arXiv:2001.08361}, 2020.

\bibitem[Katsikopoulos and Engelbrecht(2003)]{katsikopoulos2003markov}
K.~V. Katsikopoulos and S.~E. Engelbrecht.
\newblock Markov decision processes with delays and asynchronous cost
  collection.
\newblock \emph{{IEEE} Trans. Autom. Control.}, 2003.

\bibitem[Ke et~al.(2018)Ke, Goyal, Bilaniuk, Binas, Mozer, Pal, and
  Bengio]{ke2018sparse}
N.~R. Ke, A.~Goyal, O.~Bilaniuk, J.~Binas, M.~C. Mozer, C.~Pal, and Y.~Bengio.
\newblock Sparse attentive backtracking: Temporal credit assignment through
  reminding.
\newblock In \emph{Advances in Neural Information Processing Systems 31: Annual
  Conference on Neural Information Processing Systems 2018, NeurIPS 2018,
  December 3-8, 2018, Montr{\'{e}}al, Canada}, 2018.

\bibitem[Lambrechts et~al.(2021)Lambrechts, De~Geeter, Vecoven, Ernst, and
  Drion]{lambrechts2021warming}
G.~Lambrechts, F.~De~Geeter, N.~Vecoven, D.~Ernst, and G.~Drion.
\newblock Warming-up recurrent neural networks to maximize reachable
  multi-stability greatly improves learning.
\newblock \emph{arXiv preprint arXiv:2106.01001}, 2021.

\bibitem[Lampinen et~al.(2021)Lampinen, Chan, Banino, and
  Hill]{lampinen2021towards}
A.~Lampinen, S.~Chan, A.~Banino, and F.~Hill.
\newblock Towards mental time travel: a hierarchical memory for reinforcement
  learning agents.
\newblock \emph{Advances in Neural Information Processing Systems},
  34:\penalty0 28182--28195, 2021.

\bibitem[Lee et~al.(2015)Lee, Zhang, Fischer, and Bengio]{lee2015difference}
D.~Lee, S.~Zhang, A.~Fischer, and Y.~Bengio.
\newblock Difference target propagation.
\newblock In \emph{Machine Learning and Knowledge Discovery in Databases -
  European Conference, {ECML} {PKDD} 2015, Porto, Portugal, September 7-11,
  2015, Proceedings, Part {I}}, 2015.

\bibitem[Lee et~al.(2022)Lee, Nachum, Yang, Lee, Freeman, Guadarrama, Fischer,
  Xu, Jang, Michalewski, and Mordatch]{lee2022multi}
K.~Lee, O.~Nachum, M.~Yang, L.~Lee, D.~Freeman, S.~Guadarrama, I.~Fischer,
  W.~Xu, E.~Jang, H.~Michalewski, and I.~Mordatch.
\newblock Multi-game decision transformers.
\newblock In \emph{NeurIPS}, 2022.

\bibitem[Li et~al.(2023)Li, Luo, Lin, Zhang, Lu, and Ye]{li2023survey}
W.~Li, H.~Luo, Z.~Lin, C.~Zhang, Z.~Lu, and D.~Ye.
\newblock A survey on transformers in reinforcement learning.
\newblock \emph{arXiv preprint arXiv:2301.03044}, 2023.

\bibitem[Lillicrap et~al.(2016)Lillicrap, Hunt, Pritzel, Heess, Erez, Tassa,
  Silver, and Wierstra]{lillicrap2015continuous}
T.~P. Lillicrap, J.~J. Hunt, A.~Pritzel, N.~Heess, T.~Erez, Y.~Tassa,
  D.~Silver, and D.~Wierstra.
\newblock Continuous control with deep reinforcement learning.
\newblock In \emph{4th International Conference on Learning Representations,
  {ICLR} 2016, San Juan, Puerto Rico, May 2-4, 2016, Conference Track
  Proceedings}, 2016.

\bibitem[Liu et~al.(2019)Liu, Luo, Zhong, Chen, Liu, and Peng]{liu2019sequence}
Y.~Liu, Y.~Luo, Y.~Zhong, X.~Chen, Q.~Liu, and J.~Peng.
\newblock Sequence modeling of temporal credit assignment for episodic
  reinforcement learning.
\newblock \emph{arXiv preprint arXiv:1905.13420}, 2019.

\bibitem[Melo(2022)]{melo2022transformers}
L.~C. Melo.
\newblock Transformers are meta-reinforcement learners.
\newblock In \emph{International Conference on Machine Learning, {ICML} 2022,
  17-23 July 2022, Baltimore, Maryland, {USA}}, 2022.

\bibitem[Meng et~al.(2021)Meng, Gorbet, and Kulic]{meng2021memory}
L.~Meng, R.~Gorbet, and D.~Kulic.
\newblock Memory-based deep reinforcement learning for pomdps.
\newblock In \emph{{IEEE/RSJ} International Conference on Intelligent Robots
  and Systems, {IROS} 2021, Prague, Czech Republic, September 27 - Oct. 1,
  2021}, 2021.

\bibitem[Mesnard et~al.(2021)Mesnard, Weber, Viola, Thakoor, Saade,
  Harutyunyan, Dabney, Stepleton, Heess, Guez, Moulines, Hutter, Buesing, and
  Munos]{mesnard2020counterfactual}
T.~Mesnard, T.~Weber, F.~Viola, S.~Thakoor, A.~Saade, A.~Harutyunyan,
  W.~Dabney, T.~S. Stepleton, N.~Heess, A.~Guez, E.~Moulines, M.~Hutter,
  L.~Buesing, and R.~Munos.
\newblock Counterfactual credit assignment in model-free reinforcement
  learning.
\newblock In \emph{Proceedings of the 38th International Conference on Machine
  Learning, {ICML} 2021, 18-24 July 2021, Virtual Event}, 2021.

\bibitem[Micheli et~al.(2022)Micheli, Alonso, and
  Fleuret]{micheli2022transformers}
V.~Micheli, E.~Alonso, and F.~Fleuret.
\newblock Transformers are sample efficient world models.
\newblock \emph{arXiv preprint arXiv:2209.00588}, 2022.

\bibitem[Minsky(1961)]{minsky1961steps}
M.~Minsky.
\newblock Steps toward artificial intelligence.
\newblock \emph{Proceedings of the IRE}, 49\penalty0 (1):\penalty0 8--30, 1961.

\bibitem[Morad et~al.(2023)Morad, Kortvelesy, Bettini, Liwicki, and
  Prorok]{morad2023popgym}
S.~D. Morad, R.~Kortvelesy, M.~Bettini, S.~Liwicki, and A.~Prorok.
\newblock Popgym: Benchmarking partially observable reinforcement learning.
\newblock \emph{arXiv preprint arXiv:2303.01859}, 2023.

\bibitem[Ni et~al.(2022)Ni, Eysenbach, and Salakhutdinov]{ni2021recurrent}
T.~Ni, B.~Eysenbach, and R.~Salakhutdinov.
\newblock Recurrent model-free {RL} can be a strong baseline for many pomdps.
\newblock In \emph{International Conference on Machine Learning, {ICML} 2022,
  17-23 July 2022, Baltimore, Maryland, {USA}}, 2022.

\bibitem[O'Keefe and Dostrovsky(1971)]{o1971hippocampus}
J.~O'Keefe and J.~Dostrovsky.
\newblock The hippocampus as a spatial map: Preliminary evidence from unit
  activity in the freely-moving rat.
\newblock \emph{Brain research}, 1971.

\bibitem[Osband et~al.(2020)Osband, Doron, Hessel, Aslanides, Sezener, Saraiva,
  McKinney, Lattimore, Szepesv{\'{a}}ri, Singh, Roy, Sutton, Silver, and van
  Hasselt]{osband2019behaviour}
I.~Osband, Y.~Doron, M.~Hessel, J.~Aslanides, E.~Sezener, A.~Saraiva,
  K.~McKinney, T.~Lattimore, C.~Szepesv{\'{a}}ri, S.~Singh, B.~V. Roy, R.~S.
  Sutton, D.~Silver, and H.~van Hasselt.
\newblock Behaviour suite for reinforcement learning.
\newblock In \emph{8th International Conference on Learning Representations,
  {ICLR} 2020, Addis Ababa, Ethiopia, April 26-30, 2020}, 2020.

\bibitem[Ouyang et~al.(2022)Ouyang, Wu, Jiang, Almeida, Wainwright, Mishkin,
  Zhang, Agarwal, Slama, Ray, Schulman, Hilton, Kelton, Miller, Simens, Askell,
  Welinder, Christiano, Leike, and Lowe]{ouyang2022training}
L.~Ouyang, J.~Wu, X.~Jiang, D.~Almeida, C.~L. Wainwright, P.~Mishkin, C.~Zhang,
  S.~Agarwal, K.~Slama, A.~Ray, J.~Schulman, J.~Hilton, F.~Kelton, L.~Miller,
  M.~Simens, A.~Askell, P.~Welinder, P.~F. Christiano, J.~Leike, and R.~Lowe.
\newblock Training language models to follow instructions with human feedback.
\newblock In \emph{NeurIPS}, 2022.

\bibitem[Parisotto et~al.(2020)Parisotto, Song, Rae, Pascanu,
  G{\"{u}}l{\c{c}}ehre, Jayakumar, Jaderberg, Kaufman, Clark, Noury, Botvinick,
  Heess, and Hadsell]{parisotto2020stabilizing}
E.~Parisotto, H.~F. Song, J.~W. Rae, R.~Pascanu, {\c{C}}.~G{\"{u}}l{\c{c}}ehre,
  S.~M. Jayakumar, M.~Jaderberg, R.~L. Kaufman, A.~Clark, S.~Noury, M.~M.
  Botvinick, N.~Heess, and R.~Hadsell.
\newblock Stabilizing transformers for reinforcement learning.
\newblock In \emph{Proceedings of the 37th International Conference on Machine
  Learning, {ICML} 2020, 13-18 July 2020, Virtual Event}, 2020.

\bibitem[Pascanu et~al.(2014)Pascanu, G{\"{u}}l{\c{c}}ehre, Cho, and
  Bengio]{pascanu2013construct}
R.~Pascanu, {\c{C}}.~G{\"{u}}l{\c{c}}ehre, K.~Cho, and Y.~Bengio.
\newblock How to construct deep recurrent neural networks.
\newblock In \emph{2nd International Conference on Learning Representations,
  {ICLR} 2014, Banff, AB, Canada, April 14-16, 2014, Conference Track
  Proceedings}, 2014.

\bibitem[Pasukonis et~al.(2022)Pasukonis, Lillicrap, and
  Hafner]{pasukonis2022memmaze}
J.~Pasukonis, T.~P. Lillicrap, and D.~Hafner.
\newblock Evaluating long-term memory in 3d mazes.
\newblock \emph{arXiv preprint arXiv:2210.13383}, 2022.

\bibitem[Pleines et~al.(2023)Pleines, Pallasch, Zimmer, and
  Preuss]{pleines2023memory}
M.~Pleines, M.~Pallasch, F.~Zimmer, and M.~Preuss.
\newblock Memory gym: Partially observable challenges to memory-based agents.
\newblock In \emph{The Eleventh International Conference on Learning
  Representations}, 2023.

\bibitem[Radford et~al.(2019)Radford, Wu, Child, Luan, Amodei, Sutskever,
  et~al.]{radford2019language}
A.~Radford, J.~Wu, R.~Child, D.~Luan, D.~Amodei, I.~Sutskever, et~al.
\newblock Language models are unsupervised multitask learners.
\newblock \emph{OpenAI blog}, 1\penalty0 (8):\penalty0 9, 2019.

\bibitem[Raposo et~al.(2021)Raposo, Ritter, Santoro, Wayne, Weber, Botvinick,
  van Hasselt, and Song]{raposo2021synthetic}
D.~Raposo, S.~Ritter, A.~Santoro, G.~Wayne, T.~Weber, M.~M. Botvinick, H.~van
  Hasselt, and H.~F. Song.
\newblock Synthetic returns for long-term credit assignment.
\newblock \emph{arXiv preprint arXiv:2102.12425}, 2021.

\bibitem[Ren et~al.(2022)Ren, Guo, Zhou, and Peng]{ren2021learning}
Z.~Ren, R.~Guo, Y.~Zhou, and J.~Peng.
\newblock Learning long-term reward redistribution via randomized return
  decomposition.
\newblock In \emph{The Tenth International Conference on Learning
  Representations, {ICLR} 2022, Virtual Event, April 25-29, 2022}, 2022.

\bibitem[Robine et~al.(2023)Robine, H{\"{o}}ftmann, Uelwer, and
  Harmeling]{robine2023transformer}
J.~Robine, M.~H{\"{o}}ftmann, T.~Uelwer, and S.~Harmeling.
\newblock Transformer-based world models are happy with 100k interactions.
\newblock \emph{arXiv preprint arXiv:2303.07109}, 2023.

\bibitem[Sutton(1984)]{sutton1984temporal}
R.~S. Sutton.
\newblock \emph{Temporal credit assignment in reinforcement learning}.
\newblock University of Massachusetts Amherst, 1984.

\bibitem[Sutton(1988)]{sutton1988learning}
R.~S. Sutton.
\newblock Learning to predict by the methods of temporal differences.
\newblock \emph{Mach. Learn.}, 1988.

\bibitem[Sutton and Barto(2018)]{sutton2018reinforcement}
R.~S. Sutton and A.~G. Barto.
\newblock \emph{Reinforcement learning: An introduction}.
\newblock MIT press, 2018.

\bibitem[Thrun(1999)]{thrun1999monte}
S.~Thrun.
\newblock Monte carlo pomdps.
\newblock In \emph{Advances in Neural Information Processing Systems 12,
  {[NIPS} Conference, Denver, Colorado, USA, November 29 - December 4, 1999]},
  1999.

\bibitem[Todorov et~al.(2012)Todorov, Erez, and Tassa]{todorov2012mujoco}
E.~Todorov, T.~Erez, and Y.~Tassa.
\newblock Mujoco: {A} physics engine for model-based control.
\newblock In \emph{2012 {IEEE/RSJ} International Conference on Intelligent
  Robots and Systems, {IROS} 2012, Vilamoura, Algarve, Portugal, October 7-12,
  2012}, 2012.

\bibitem[van Hasselt et~al.(2016)van Hasselt, Guez, and Silver]{van2016deep}
H.~van Hasselt, A.~Guez, and D.~Silver.
\newblock Deep reinforcement learning with double q-learning.
\newblock In \emph{Proceedings of the Thirtieth {AAAI} Conference on Artificial
  Intelligence, February 12-17, 2016, Phoenix, Arizona, {USA}}, 2016.

\bibitem[Vaswani et~al.(2017)Vaswani, Shazeer, Parmar, Uszkoreit, Jones, Gomez,
  Kaiser, and Polosukhin]{vaswani2017attention}
A.~Vaswani, N.~Shazeer, N.~Parmar, J.~Uszkoreit, L.~Jones, A.~N. Gomez,
  L.~Kaiser, and I.~Polosukhin.
\newblock Attention is all you need.
\newblock In \emph{Advances in Neural Information Processing Systems 30: Annual
  Conference on Neural Information Processing Systems 2017, December 4-9, 2017,
  Long Beach, CA, {USA}}, 2017.

\bibitem[Waskom(2021)]{Waskom2021}
M.~L. Waskom.
\newblock seaborn: statistical data visualization.
\newblock \emph{Journal of Open Source Software}, 6\penalty0 (60):\penalty0
  3021, 2021.
\newblock \doi{10.21105/joss.03021}.
\newblock URL \url{https://doi.org/10.21105/joss.03021}.

\bibitem[Wolf et~al.(2019)Wolf, Debut, Sanh, Chaumond, Delangue, Moi, Cistac,
  Rault, Louf, Funtowicz, and Brew]{wolf2020huggingfaces}
T.~Wolf, L.~Debut, V.~Sanh, J.~Chaumond, C.~Delangue, A.~Moi, P.~Cistac,
  T.~Rault, R.~Louf, M.~Funtowicz, and J.~Brew.
\newblock Huggingface's transformers: State-of-the-art natural language
  processing.
\newblock \emph{arXiv preprint arXiv:1910.03771}, 2019.

\bibitem[Yang and Nguyen(2021)]{yang2021recurrent}
Z.~Yang and H.~Nguyen.
\newblock Recurrent off-policy baselines for memory-based continuous control.
\newblock \emph{arXiv preprint arXiv:2110.12628}, 2021.

\bibitem[Yarats et~al.(2021{\natexlab{a}})Yarats, Kostrikov, and
  Fergus]{kostrikov2020image}
D.~Yarats, I.~Kostrikov, and R.~Fergus.
\newblock Image augmentation is all you need: Regularizing deep reinforcement
  learning from pixels.
\newblock In \emph{9th International Conference on Learning Representations,
  {ICLR} 2021, Virtual Event, Austria, May 3-7, 2021}, 2021{\natexlab{a}}.

\bibitem[Yarats et~al.(2021{\natexlab{b}})Yarats, Zhang, Kostrikov, Amos,
  Pineau, and Fergus]{yarats2021improving}
D.~Yarats, A.~Zhang, I.~Kostrikov, B.~Amos, J.~Pineau, and R.~Fergus.
\newblock Improving sample efficiency in model-free reinforcement learning from
  images.
\newblock In \emph{Thirty-Fifth {AAAI} Conference on Artificial Intelligence,
  {AAAI} 2021, Thirty-Third Conference on Innovative Applications of Artificial
  Intelligence, {IAAI} 2021, The Eleventh Symposium on Educational Advances in
  Artificial Intelligence, {EAAI} 2021, Virtual Event, February 2-9, 2021},
  2021{\natexlab{b}}.

\bibitem[Zheng et~al.(2022)Zheng, Zhang, and Grover]{zheng2022online}
Q.~Zheng, A.~Zhang, and A.~Grover.
\newblock Online decision transformer.
\newblock In \emph{International Conference on Machine Learning, {ICML} 2022,
  17-23 July 2022, Baltimore, Maryland, {USA}}, 2022.

\bibitem[Zilli and Hasselmo(2008)]{zilli2008modeling}
E.~A. Zilli and M.~E. Hasselmo.
\newblock Modeling the role of working memory and episodic memory in behavioral
  tasks.
\newblock \emph{Hippocampus}, 18\penalty0 (2):\penalty0 193--209, 2008.

\end{thebibliography}
\bibliographystyle{abbrvnat}
}

\clearpage
\appendix

\part{Appendix}
{
\hypersetup{linkcolor=black}
\parttoc
}

\section{Proofs}
\label{app:proof}
\begin{lemma}[\textbf{Upper bound of value memory length}]
\label{lm:value_memory_length} For any policy $\pi$, 
\begin{align}
\valuemem(\pi) \le \max(\rewardmem, \transitionmem, \policymem(\pi) -1)
\end{align}
\end{lemma}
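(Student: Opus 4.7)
The plan is to do a backward induction on $t$, using the Bellman expansion of $Q^\pi$ and tracking how far back each factor looks at the history. Let $n \defeq \max(\rewardmem, \transitionmem, \policymem(\pi)-1)$; the goal is to show $Q^\pi(h_{1:t},a_t)$ depends on $h_{1:t}$ only through $h_{t-n+1:t}$ for every $t$, which is exactly $\valuemem(\pi) \le n$.

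First I would establish the base case. At $t=T$, only the terminal reward remains, so $Q^\pi(h_{1:T},a_T) = \E{}{r_T \mid h_{1:T},a_T}$, and by Definition~\ref{def:reward_memory_length} this collapses to $\E{}{r_T \mid h_{T-\rewardmem+1:T}, a_T}$, hence depends on at most the last $\rewardmem \le n$ observations. The inductive step is the heart of the argument: assuming $Q^\pi(h_{1:t+1},a_{t+1})$ depends on $h_{1:t+1}$ only through the last $n$ observations and actions, expand
\begin{align*}
Q^\pi(h_{1:t},a_t) = \E{}{r_t \mid h_{1:t},a_t} + \gamma \sum_{o_{t+1}} P(o_{t+1}\mid h_{1:t},a_t) \sum_{a_{t+1}} \pi(a_{t+1}\mid h_{1:t+1})\, Q^\pi(h_{1:t+1},a_{t+1}).
\end{align*}
Then I would track the four factors individually: the reward term looks back $\rewardmem$ steps; the transition kernel looks back $\transitionmem$ steps; the policy factor $\pi(a_{t+1}\mid h_{1:t+1})$ nominally looks back $\policymem(\pi)$ into $h_{1:t+1}$, but since one of those observations is the freshly generated $o_{t+1}$, its dependence on $h_{1:t}$ uses only the last $\policymem(\pi)-1$ observations and actions of $h_{1:t}$; and the inductive hypothesis on $Q^\pi(h_{1:t+1},a_{t+1})$ gives, after the $o_{t+1}$-sum and $a_{t+1}$-sum are carried out, a dependence on $h_{1:t}$ through its last $n-1$ observations and actions. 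Taking the max of these four windows yields a dependence on $h_{1:t}$ through at most the last $n$ observations and actions, closing the induction.

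The main obstacle I anticipate is bookkeeping on the indices. The definitions of $\rewardmem, \transitionmem, \policymem(\pi)$ count observations, while $h_{t-k+1:t}$ bundles $k$ observations with $k-1$ past actions; and the $-1$ in $\policymem(\pi)-1$ arises precisely because integrating over $o_{t+1}$ shifts the window by one step. I would write the factors as functions of explicit subsequences $o_{t-k+1:t}, a_{t-k+1:t}$ and verify at each step that the marginalization over $(o_{t+1},a_{t+1})$ does not leak dependence on earlier observations. Once those indices are pinned down, the remainder is a direct substitution into the Bellman sum and a union-of-windows argument.
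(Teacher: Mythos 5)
Your proposal is correct and rests on the same core idea as the paper's proof: expand the return via the Bellman/trajectory factorization, truncate the reward, transition, and policy factors to their respective memory windows, and note that the policy factor at step $t{+}1$ reaches back only $\policymem(\pi)-1$ steps into $h_{1:t}$ because $o_{t+1}$ is freshly generated. The paper simply unrolls the whole sum at once rather than phrasing it as a backward induction, which is a presentational rather than substantive difference (both, incidentally, implicitly assume a finite horizon $T$).
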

\begin{proof}[Proof of Lemma~\ref{lm:value_memory_length}]
We show this by expanding the sum of rewards in $Q$-value function:
\begin{align}
&Q^{\pi}(h_{1:t},a_t) = \E{\pi,\mathcal M}{\sum_{i=t}^T \gamma^{i-t} r_i \mid h_{1:t},a_t} \\ 
&= \underbrace{\E{}{r_t \mid h_{1:t},a_t}}_{\text{reward memory}} + \sum_{i=t}^{T-1} \gamma^{i+1-t} \\
&\int  \underbrace{P(o_{i+1} \mid h_{1:i}, a_{i})}_{\text{transition memory}}
\underbrace{\pi(a_{i+1} \mid h_{1:i+1})}_{\text{policy memory}} \underbrace{\E{}{r_{i+1} \mid h_{1:i+1},a_{i+1}}}_{\text{reward memory}} do_{t+1:i+1}da_{t+1:i+1}\\
&\stackrel{(A)}= \E{}{r_t \mid h_{t-{\rewardmem+1}:t},a_t}+ \sum_{i=t}^{T-1} \gamma^{i+1-t} \\ 
& \int P(o_{i+1} \mid h_{i-\transitionmem+1:i}, a_{i})\pi(a_{i+1} \mid h_{i-\policymem(\pi)+2:i+1})\E{}{r_{i+1} \mid h_{i-\rewardmem+2:i+1},a_{i+1}}do_{t+1:i+1}da_{t+1:i+1} \\ 
&= \E{\pi,\mathcal M}{\sum_{i=t}^T \gamma^{i-t} r_i \mid h_{\min(t-\rewardmem+1, t-\transitionmem+1, t-\policymem(\pi)+2, t-\rewardmem+2):t},a_t} \\ 
&= \E{\pi,\mathcal M}{\sum_{i=t}^T \gamma^{i-t} r_i \mid h_{t+1-\max(\rewardmem, \transitionmem, \policymem(\pi)-1):t},a_t} 
\end{align}
where line $(A)$ follows from the definition of memory lengths in policy (Def.~\ref{def:policy_memory_length}), reward (Def.~\ref{def:reward_memory_length}), and transition (Def.~\ref{def:transition_memory_length}).
By the definition of value memory length (Def.~\ref{def:value_memory_length}), the shortest history to match the value, we have the upper bound:
$
\valuemem(\pi) \le \max(\rewardmem, \transitionmem, \policymem(\pi)-1)
$.

\end{proof}

\begin{lemma}[\textbf{Lower bound of value memory length}]
\label{prop:policy_improv} For an optimal policy $ \pi^* \in \argmin_{\pi \in \Pi^*_\mathcal M} \{\policymem(\pi)\} $, we have  $\policymem(\pi^*) \le \valuemem(\pi^*)$.
\end{lemma}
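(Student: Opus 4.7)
The plan is to reduce the statement to an elementary observation: an agent able to recover $Q^*$ from only the last $n=\valuemem(\pi^*)$ steps of history can already act optimally using just that truncated context, so there exists an optimal policy whose policy memory length is at most $n$; the minimality condition on $\policymem(\pi^*)$ baked into the hypothesis then closes the inequality.

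First I would use the fact that $\pi^*\in\Pi^*_\mathcal M$ implies $Q^{\pi^*}=Q^*$, and then invoke Def.~\ref{def:value_memory_length} with $n=\valuemem(\pi^*)$ to conclude that along every reachable history $Q^*(h_{1:t},a_t)=Q^{\pi^*}_n(h_{t-n+1:t},a_t)$. In particular, the set $\argmax_{a_t} Q^*(h_{1:t},a_t)$ depends only on the truncated context $h_{t-n+1:t}$, which is the key structural fact I want to exploit.

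Next I would manufacture an auxiliary optimal policy $\tilde\pi$ with $\ctx(\tilde\pi)=n$: fix any deterministic tie-breaking rule and define $\tilde\pi(h_{t-n+1:t})\in\argmax_{a_t}Q^{\pi^*}_n(h_{t-n+1:t},a_t)$. Because this argmax coincides with the greedy set for $Q^*$ on every reachable history, $\tilde\pi$ is greedy with respect to $Q^*$ and therefore optimal, so $\tilde\pi\in\Pi^*_\mathcal M$. Since the action is a deterministic function of $h_{t-n+1:t}$, the conditional-independence condition in Def.~\ref{def:policy_memory_length} is satisfied at $k=n$, giving $\policymem(\tilde\pi)\le n$. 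Applying the minimality hypothesis $\pi^*\in\argmin_{\pi\in\Pi^*_\mathcal M}\{\policymem(\pi)\}$ yields $\policymem(\pi^*)\le\policymem(\tilde\pi)\le n=\valuemem(\pi^*)$, as required.

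The main delicate point I expect to verify carefully is that the argmax used to define $\tilde\pi$ is genuinely a function of the truncated window $h_{t-n+1:t}$ (so that $\tilde\pi$ is well-defined and identical across any two full histories sharing that window), and that greediness with respect to $Q^*$ yields global optimality even in a POMDP. Both follow from the equality $Q^{\pi^*}(h_{1:t},a_t)=Q^{\pi^*}_n(h_{t-n+1:t},a_t)$ together with the standard characterization of optimal policies as greedy policies with respect to $Q^*$, but they are the places where care is required.
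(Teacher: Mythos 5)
Your proof is correct and follows essentially the same route as the paper: both construct a greedy policy with context length $\valuemem(\pi^*)$ from the truncated value function $Q^{\pi^*}_{\valuemem(\pi^*)}$, observe that its policy memory length is therefore at most $\valuemem(\pi^*)$, show it is still optimal, and invoke the minimality of $\policymem(\pi^*)$ over $\Pi^*_\mathcal M$. The only difference is presentational: you argue directly via $Q^{\pi^*}=Q^*$ and the greedy-optimality characterization, whereas the paper phrases it as a proof by contradiction and establishes optimality of the truncated greedy policy through an explicit policy-improvement computation rather than assuming that identity.
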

\begin{proof}
Proof by contradiction. Suppose $\valuemem(\pi^*) < \policymem(\pi^*)$. We can construct a deterministic policy $\pi'$ that has context length of $\valuemem(\pi^*)$: 
\begin{align}
\pi'(h_{t-\valuemem(\pi^*)+1:t}) &= \argmax_{a_t} Q^{\pi^*}_{\valuemem(\pi^*)}(h_{t-\valuemem(\pi^*)+1:t},a_t)  \\ 
&= \argmax_{a_t} Q^{\pi^*}(h_{1:t},a_t), 
\quad \forall t, h_{1:t}
\end{align}
This is a greedy policy derived from $\pi^*$'s value function. The policy memory length of $\pi'$ is \textbf{strictly shorter} than $\policymem(\pi^*)$, by definition and assumption:
\begin{align}
\policymem(\pi') \le \ctx(\pi') = \valuemem(\pi^*) < \policymem(\pi^*)
\end{align}

Similar to policy improvement theorem~\citep[Chap~4.2]{sutton2018reinforcement} in Markov policies, we can show that $\pi'$ is at least as good as $\pi^*$, thus contradicting the assumption that $\pi^*$ is an optimal policy that has the shortest policy memory length. To see this, for any $t$ and $h_{1:t}$, 
\begin{align}
&V^{\pi^*}(h_{1:t}) \le \max_{a_t}Q^{\pi^*}(h_{1:t},a_t) \\ 
&\stackrel{(A)}= \max_{a_t}Q^{\pi^*}(h_{1:t},\pi'(h_{t-\valuemem(\pi^*)+1:t})) \\
&\stackrel{(B)}= \E{\mathcal M}{r_t + \gamma Q^{\pi^*}(h_{1:t+1},a_{t+1}) \mid h_{1:t}, a_t\sim\pi', a_{t+1}\sim \pi^*} \\ 
&\stackrel{(C)}\le \E{\mathcal M}{r_t + \gamma Q^{\pi^*}(h_{1:t+1},a_{t+1}) \mid h_{1:t}, a_{t:t+1}\sim\pi'}  \\ 
&\dots \\
&\le \E{\mathcal M}{\sum_{i=t}^T \gamma^{i-t} r_i \mid h_{1:t}, a_{t:T-1}\sim \pi'}  \\
&= \E{\pi',\mathcal M}{\sum_{i=t} \gamma^{i-t} r_i \mid h_{1:t}} = V^{\pi'}(h_{1:t})
\end{align}
where $(A)$ follows the definition of $\pi'$, $(B)$ uses the Bellman equation and $(C)$ uses the  property of greedy policy $\pi'$.
\end{proof}

\begin{proof}[Proof of Theorem~\ref{thm:optimal_policy_memory}]
By Lemma~\ref{prop:policy_improv}, 
$
\valuemem(\pi^*) \ge \policymem(\pi^*)
$. On the other hand, by Lemma~\ref{lm:value_memory_length},
$
\valuemem(\pi^*) \le \max(\rewardmem, \transitionmem, \policymem(\pi^*) -1)
$.
Thus,
\begin{align}
\policymem(\pi^*) \le \max(\rewardmem, \transitionmem, \policymem(\pi^*) -1) 
\end{align}
This implies
\begin{align}
&\policymem(\pi^*) -1 < \max (\rewardmem, \transitionmem)\\
&\valuemem(\pi^*) \le \max (\rewardmem, \transitionmem)
\end{align}
\end{proof}

\begin{proof}[$c(\pi^*)$ may be not unique among all optimal policies]
For example, consider an MDP described in Fig.~\ref{fig:boat}, and two optimal deterministic Markov policy, $\pi^*_1, \pi^*_2$, only different at the state P2: $\pi^*_1(\text{P2})=x$ while $\pi^*_2(\text{P2})=y$. 
It is easy to compute the credit assignment length in this MDP, because only state P1 contains suboptimal action. 
At state P1, the optimal action $y$ starts to be better than suboptimal action $x$ at the next $2$ steps for $\pi_2^*$, while at the next $3$ steps for $\pi_1^*$. Thus, $c(\pi^*_1) = 3$ and $c(\pi^*_2) = 2$. 

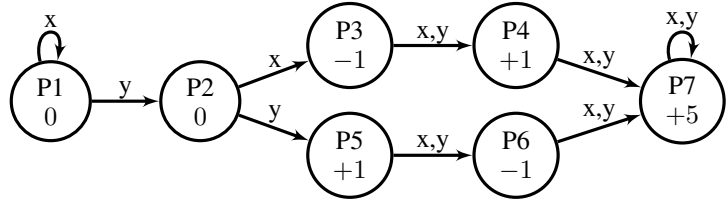
\begin{figure}
    \centering

\begin{tikzpicture}[>=latex',line join=bevel,very thick, scale=0.6]
\node (P1) at (25.456bp,59.456bp) [draw,circle, align=center] {P1\\ $0$};
  \node (P2) at (119.37bp,59.456bp) [draw,circle, align=center] {P2\\
$0$};
  \node (P3) at (213.28bp,94.456bp) [draw,circle, align=center] {P3\\
$-1$};
  \node (P5) at (213.28bp,25.456bp) [draw,circle, align=center] {P5\\
$+1$};
  \node (P4) at (317.69bp,94.456bp) [draw,circle, align=center] {P4\\
$+1$};
  \node (P7) at (422.1bp,59.456bp) [draw,circle, align=center] {P7\\
$+5$};
  \node (P6) at (317.69bp,25.456bp) [draw,circle, align=center] {P6\\
$-1$};
  \draw [->] (P1) ..controls (16.236bp,93.998bp) and (19.077bp,102.91bp)  .. (25.456bp,102.91bp) .. controls (29.542bp,102.91bp) and (32.177bp,99.254bp)  .. (P1);
  \definecolor{strokecol}{rgb}{0.0,0.0,0.0};
  \pgfsetstrokecolor{strokecol}
  \draw (25.456bp,109.91bp) node {x};
  \draw [->] (P1) ..controls (61.159bp,59.456bp) and (72.85bp,59.456bp)  .. (P2);
  \draw (72.412bp,66.456bp) node {y};
  \draw [->] (P2) ..controls (154.51bp,72.552bp) and (167.64bp,77.447bp)  .. (P3);
  \draw (166.32bp,84.456bp) node {x};
  \draw [->] (P2) ..controls (154.51bp,46.734bp) and (167.64bp,41.978bp)  .. (P5);
  \draw (166.32bp,51.456bp) node {y};
  \draw [->] (P3) ..controls (251.71bp,94.456bp) and (267.57bp,94.456bp)  .. (P4);
  \draw (265.49bp,101.46bp) node {x,y};
  \draw [->] (P4) ..controls (355.65bp,81.733bp) and (373.06bp,75.897bp)  .. (P7);
  \draw (369.9bp,85.456bp) node {x,y};
  \draw [->] (P5) ..controls (251.71bp,25.456bp) and (267.57bp,25.456bp)  .. (P6);
  \draw (265.49bp,32.456bp) node {x,y};
  \draw [->] (P6) ..controls (355.65bp,37.815bp) and (373.06bp,43.484bp)  .. (P7);
  \draw (369.9bp,52.456bp) node {x,y};
  \draw [->] (P7) ..controls (411.59bp,93.745bp) and (414.79bp,102.91bp)  .. (422.1bp,102.91bp) .. controls (426.9bp,102.91bp) and (429.93bp,98.964bp)  .. (P7);
  \draw (422.1bp,109.91bp) node {x,y};
\end{tikzpicture}
  \vspace{-0.5em}
    \caption{An MDP that shows the credit assignment length over optimal policies is not a constant. Each node is a state with its name on the top and its reward at the bottom. The initial state is P1. The action space is $\{x,y\}$. The finite horizon $T \ge 4$ and $\gamma=1$. }
    \label{fig:boat}
  \vspace{-1em}
\end{figure}

\end{proof}

\section{Details on Environments Related to Memory and Credit Assignment}
\label{app:envs}
\subsection{Abstract Problems}

\begin{example}[$n$-order MDPs]
\label{eg:n-MDP}
It is a POMDP where the recent $n$ observations form the current state, \ie the transition is $P(o_{t+1} \mid o_{t-n+1:t}, a_t)$ and the reward is $R_t(o_{t-n+1:t},a_t)$. 
\end{example}

\begin{example}[Episodic reward]
\label{eg:episodic_reward}
It is a POMDP with a finite horizon of $T$. The reward function is episodic in that $R_t = 0$ for $\forall t < T$. 
\end{example}

\begin{example}[Decomposed episodic reward~\citep{ren2021learning}]
\label{eg:decomposed}
It has an episodic reward (Eg.~\ref{eg:episodic_reward}), decomposed into the sum of Markovian rewards: 
$
R_T(h_{1:T},a_T) = \sum_{t=1}^T r'(o_t,a_t)
$, 
where $r':\mathcal O \times \mathcal A \to \mathbb R$ is a Markovian reward function. The transition $P(o_{t+1} \mid o_t,a_t)$ is Markovian.  
The smallest policy memory length for an optimal policy $\policymem(\pi^*)$ can be shown as $1$ (see below). 
\end{example}

\begin{proof}[Proof of decomposed episodic reward problem requiring $\policymem(\pi^*) =1$ at most.]
\label{proof:decomposed}
Suppose the policy is non-stationary, composed of $\{\pi_{t}\}_{t=1}^T$. For any time step $t\le T$ in the POMDP $\mathcal M$,
\begin{align}
&Q^\pi(h_{1:t},a_t) = \E{\pi,\mathcal M}{\sum_{k=t}^T r_k \mid h_{1:t},a_t} \\
&= \E{\pi,\mathcal M}{r_T \mid h_{1:t},a_t} =\E{\pi,\mathcal M}{\sum_{k=1}^T r'(o_k,a_k)\mid h_{1:t},a_t} \\ 
&= \sum_{k=1}^t r'(o_k,a_k) + \E{\pi_{t+1:T}, o_{t+1}\sim P(\mid o_t,a_t),\mathcal M}{\sum_{k=t+1}^T r'(o_k,a_k)\mid h_{1:t},a_t}
\end{align}
By taking argmax, 
\begin{align}
\label{eq:argmax}
\argmax_{a_t} Q^\pi(h_{1:t},a_t) = \argmax_{a_t} r'(o_t,a_t) + \E{\pi_{t+1:T}, o_{t+1}\sim P(\mid o_t,a_t),\mathcal M}{\sum_{k=t+1}^T r'(o_k,a_k)\mid h_{1:t},a_t}
\end{align}
Now we show that there exists an optimal policy that is Markovian by backward induction. When $t = T$, \eqref{eq:argmax} reduces to 
\begin{align}
\argmax_{a_T} Q^\pi(h_{1:T},a_T) = \argmax_{a_T} r'(o_T,a_T)
\end{align}
thus Markovian policy $\pi_T(o_T) = \argmax_{a_T} r'(o_T,a_T)$ is optimal. Now assume $\pi_{t+1:T}$ are Markovian and optimal, consider the step $t$ in \eqref{eq:argmax}, all future observations and actions $(o_{t+1:T}, a_{t+1:T})$ rely on $(o_t,a_t)$ by the property of Markovian transition and induction, thus optimal action $a_t^*$ can only rely on $o_t$.\looseness=-1 

\end{proof}

\begin{example}[Delayed rewards of $n$ steps~\citep{arjona2019rudder}]
\label{eg:delayed_rewards}
The transition $P(o_{t+1} \mid o_t, a_{t})$ is Markovian, and the reward $R_t(o_{t-n+1},a_{t-n+1})$ depends on the observation and action $n$ steps before. Decomposed episodic rewards (Eg.~\ref{eg:decomposed}) can be viewed as an extreme case of delayed rewards. 
\end{example}

\begin{example}[Delayed execution of $n$ steps~\citep{derman2021acting}]
\label{eg:delayed_execution}
The transition is $P(o_{t+1} \mid o_t, a_{t-n})$, and the reward is $R_t(o_t,a_{t-n})$.  
\end{example}

\begin{example}[Delayed observation of $n$ steps~\citep{katsikopoulos2003markov}]
\label{eg:delayed_observation}
The transition is $P(o_{t+1} \mid o_{t}, a_{t-n})$, and the reward is $R_t(o_t,a_{t-n:t})$.  
\end{example}

We summarize our analysis in Table~\ref{tab:abstract_problems}.

\begin{table}[t]
    \centering
    \caption{\textbf{The memory and credit assignment lengths required in the related abstract problems, using our notion.} Assume all tasks have a horizon of $T$.}
      \vspace{-0.5em}
    \begin{tabular}{c|ccc|c}
    \toprule
       \textbf{Abstract Problem} $\mathcal M$  & $\policymem(\pi^*)$ & $\rewardmem$ & $\transitionmem$ & $\creditlen$ \\
       \toprule
       MDP &   $[0,1]$ & $[0,1]$ & $[0,1]$ & $[1,T]$ \\
       $n$-order MDP (Eg.~\ref{eg:n-MDP})  & $[0,n]$ & $[0,n]$ & $[0,n]$ & $[1,T]$ \\ 
       POMDP &   $[0,T]$ & $[0,T]$ & $[0,T]$ & $[1,T]$ \\
       Episodic reward (Eg.~\ref{eg:episodic_reward})  & $T$ & $T$ & $[0,T]$ &  $T$ \\ 
       Decomposed episodic reward (Eg.~\ref{eg:decomposed}) &    $[0,1]$ & $T$ & $[0,1]$ & $T$ \\
       Delayed rewards of $n$ steps (Eg.~\ref{eg:delayed_rewards}) &    $[0,1]$ & $n$ & $[0,1]$ & $[n,T]$ \\
        Delayed execution of $n$ steps (Eg.~\ref{eg:delayed_execution}) &   $n$ & $n$ & $n$ & $[n,T]$ \\
        Delayed observation of $n$ steps (Eg.~\ref{eg:delayed_observation})  &    $n$ & $n$ & $n$ & $[0,T]$ \\ 
    \bottomrule
    \end{tabular}    \label{tab:abstract_problems}
      \vspace{-1em}
\end{table}

\subsection{Concrete Benchmarks for Memory}

\textbf{T-Maze~\citep{bakker2001reinforcement}.} The task is almost the same as our Passive T-Maze, except that the agent is not penalized if it moves left instead of right in the corridor. The modification results in a credit assignment length closer to $T$. The authors test RL algorithms with the largest horizon of $70$.

\textbf{MiniGrid-Memory~\citep{gym_minigrid}.} This task has the same structure as our Active T-Maze. In the worst-case scenario, the optimal policy can traverse the horizontal corridor back and forth, and then follow the vertical corridor to solve the task. Thus, the memory and credit assignment lengths of optimal policy is upper bounded by three times the maze side length, which is $3\times 17 = 51$ for a $17\times 17$ maze (\texttt{MiniGrid-MemoryS17Random-v0}). The reward memory length can equal the horizon of $1445$.

\textbf{Passive Visual Match~\citep{hung2019optimizing}}. The environment consists of a partially observable $7 \times 11$ grid-world, where the agent can only observe a $5\times 5$ grid surrounding itself. The task is separated into three phases. In the first, the agent passively observes a randomly generated color. In the second, the agent must pick up apples with immediate reward. In the final phase, three random colored squares are displayed, and the agent must pick up the one corresponding to the color observed in the first phase, which tests the agent's ability to recall temporally distant events. Since rewards are relatively immediate with respect to the agent's actions, the credit assignment length of optimal policy is upper bounded by the length of the shortest path needed to traverse the grid-world, \ie $7+11 = 18$.

\textbf{TMaze Long and TMaze Long Noise~\citep{beck2019amrl}.} Both tasks are structurally similar to our Passive T-Maze. In TMaze Long, the agent is forced to take the forward action until reaching the junction (\ie, the action space is a singleton except at the junction). At the junction, the agent is required to choose the goal candidate matching the color it observed at the starting position. Therefore, the memory length is the horizon $T$ and the credit assignment length is $1$ since only the terminal action can be credited. TMaze Long Noise introduces uniform noise appended to the observation space. Both tasks have a horizon and corridor length of $100$.

\textbf{PsychLab \citep{fortunato2019generalization}}. This environment simulates a lab environment in first person, and can be sub-categorized into four distinct tasks. In all tasks, one or many images are shown, and the overall objective of all tasks is to correctly memorize the contents of the images for future actions. Images are passively observed by the agent, thus eliminating the need for long-term credit assignment, and instead focusing on long-term memory.

\textbf{Spot the Difference \citep{fortunato2019generalization}.} In this task, two nearly identical rooms are randomly generated, and the agent must correctly identify the differences between the rooms. The agent must navigate through a corridor of configurable length to go from one room to another. While this task was designed to test an agent's memory, effective temporal credit assignment is also necessary for an optimal policy. Similar in flavor to \textbf{Active Visual Match \citep{hung2019optimizing}}, an agent must first explore the initial room to be able to identify the differences in the second room. Rewards for this exploratory phase are only given near the end of the task once the agent has identified all differences. Long-term credit assignment is needed to incorporate these actions into an optimal policy. 

\textbf{Goal Navigation \citep{fortunato2019generalization}.} A random maze and goal state is generated at the start of each episode. The agent is rewarded every time it reaches the goal state, and is randomly re-initialized in the maze each time it does so. Crucial information about the maze needs to be memorized throughout the episode for optimal performance, while exploratory actions may also be taken early for better quicker navigation later. Consequently, this environment fails to cleanly disentangle temporal credit assignment from memory.

\textbf{Numpad~\citep{parisotto2020stabilizing,humplik2019meta}.} This is a visual RL problem with continuous control. In this task, there is an $N\times N$ number pad, and the goal of the agent is to activate up to $N^2$ pads in a specific order. The order is randomized at the beginning of each episode. The agent does not know the order, and has to explore to figure it out. The agent will receive an immediate reward of $1$ if it hits the next pad correctly; otherwise, the agent must restart. As a result, the credit assignment length is short, although the exact value is hard to determine. The optimal policy needs to memorize the order of $N^2$ pads during the whole episode, thus the memory length is upper bounded by the full horizon of $500$.

\textbf{Memory Length~\citep[A.6]{osband2019behaviour}}. The task is similar to our Passive T-Maze in terms of reward memory and credit assignment lengths. However, we did not adopt this task because the agent cannot change the observation through its action, making the task solvable by supervised learning. Moreover, in this task, the observation is i.i.d. sampled at each time step, making a transition memory length of only $1$.  The authors set the horizon at a maximum of $100$. 

\textbf{Reacher-pomdp~\citep{yang2021recurrent}.} In this task, the goal position in Reacher is only revealed at the first step. The optimal policy has to memorize the goal until it is reached. The policy memory length is thus long, but not necessarily the full horizon. The task provides dense rewards, so the credit assignment length is short, although the exact value is unclear. The horizon is set to $50$.

\textbf{T-Maze~\citep{lambrechts2021warming}.} This task has the same structure as T-Maze~\citep{bakker2001reinforcement}. The RL algorithms are tested with the largest horizon of $200$.

\textbf{Ballet~\citep{lampinen2021towards}.} Ballet is a 2D gridworld task with the agent situated at the center. In its most challenging variant, there are 8 dancers around the agent, each executing a 16-step dance sequentially. A 48-step interval separates two consecutive dances. Consequently, the agent observes the environment for a minimum of  $16*8 + 48*(8-1) = 464$ steps. Following the final dance, the agent receives a reward upon reaching the dancer performing a specific routine. Hence, the optimal policy's memory length is at least $464$ steps, and the reward memory length is capped by the horizon of $1024$. The agent's actions are constrained to the post-dance phase, resulting in a short credit assignment length determined by the time taken to reach the correct dancer.

\textbf{HeavenHell~\citep{esslinger2022deep,thrun1999monte}.} This task has the same structure as our Active T-Maze. The agent has to first move south to reach the oracle and then move north to reach the heaven. The reward is given only at the terminal step. The credit assignment length is the full horizon, which is around $20$. The Gym-Gridverse and Car Flag tasks in their work have the same structures as HeavenHell but feature more complex observation spaces. 

\textbf{Memory Cards~\citep{esslinger2022deep}.} The task can be viewed as a discrete version of Numpad. There are $N$ pairs of cards, with each pair sharing the same value. The agent can observe a random card's value at each step, after which the card value is hidden. The agent then takes action to choose the paired card that matches the observed card. It will get an immediate reward of $0$ if it makes the correct choice, otherwise, a reward of $-1$. The credit assignment length is thus $1$. The minimal memory length is around $O(N)$ to observe all cards' values, but hard to determine exactly due to randomness. The finite horizon is set to $50$.

\textbf{Memory Maze~\citep{pasukonis2022memmaze}.}
In this environment, the agent is placed in a randomly generated $N\times N$ maze with $N = 15$ at most. There are $K$ colored balls in the maze, and the agent must pick up the $K$ colored balls in a specified order (randomized in every episode). 
As for the reward memory, the reward is given only when the agent picks up the correct ball and solely depends on current observation and action, thus $\rewardmem =1$. 
Although their longest horizon $T$ can be $4000$, the credit assignment length is upper bounded by the size of the maze. In the worst-case scenario, an optimal policy can perform a naive search on the entire maze and uncover the maze plan to obtain a reward, which costs $O(N^2)$ steps. 
Nevertheless, the optimal policy memory length can be much longer than $O(N^2)$ steps and close to the horizon $T$. This is because the optimal policy may need to memorize the initial subsequence of observations to infer the maze plan when taking current actions.

\textbf{POPGym~\citep{morad2023popgym}.} This benchmark comprises $15$ tasks from which we examine several representative ones: 
\begin{itemize}[leftmargin=*,itemsep=0pt, topsep=0pt]
    \item The diagnostic POMDP task, Repeat First, provides a reward based on whether the action repeats the first observation. Thus, the credit assignment length is $1$. Although this task indeed has a reward memory length of $T$, the optimal policy memory length can be as low as $2$ simply by recalling the previous optimal action. The horizon $T$ of this task can be $16$ decks ($831$ steps). 
    \item In another diagnostic POMDP task, Autoencode, an agent first observes a sequence of cards, and then must reproduce the exact sequence in a reverse order to earn rewards. Here, the credit assignment length is $1$, and the optimal policy memory length is at most twice the number of cards, which is $6$ decks ($311$ steps).  
    \item The control POMDP tasks, Stateless Cartpole and Pendulum, occlude velocities, making them similar to the ``-P'' tasks we previously evaluated. These tasks require short-term dependencies. 
    \item  The game POMDP task, Battleship, requires the agent to deduce the locations of all ships on a board without observing the board. This requires a memory length upper bounded by the area of the board, \ie $12^2 = 144$, and short-term credit assignment. Similarly, the Concentration game requires the agent to maximize the number of card matches in an episode. Although the reward memory length of horizon $T$, the optimal policy memory length can be restricted to the number of cards, \ie $2*52=104$.  
\end{itemize}
Overall, this benchmark indeed emphasizes the evaluation of memory capabilities with short-term credit assignment, where the memory length is limited by the maximal horizon of around $831$. 

\textbf{Memory Gym~\citep{pleines2023memory}.}  This benchmark has a pixel-based observation space and a discrete action space for controlling movement within a gridworld. The Mortar Mayhem (MM) task is considered as one of the hardest tasks in this benchmark. MM requires the agent to first memorize a sequence of five commands, and then execute the exact commands in order. A reward is given when the agent successfully executes one command. Thus, the credit assignment length for optimal policy is at most the area of the gridworld, which is $5*5=25$. The memory length is at most the full horizon, which according to their Table 3 is at most $135$.

\subsection{Concrete Benchmarks for Credit Assignment}

\textbf{Active Visual Match~\citep{hung2019optimizing}}. The agent has to open the door to observe the color in the first phase, picks apples up in the second phase, and goes to the door with the corresponding color in the final phase. Thus, the policy needs to memorize the color.  

\textbf{Episodic MuJoCo~\citep{liu2019sequence,ren2021learning}.} It is a decomposed episodic reward problem (Eg.~\ref{eg:decomposed}). The terminal reward is the sum of the original Markov dense rewards. They tried the maximal horizon as $1000$. 

\textbf{Umbrella Length~\citep[A.5]{osband2019behaviour}}. It is an MDP and the initial action taken by the agent will be carried through the episode. The terminal reward depends on the terminal state, which is determined by the initial action. Thus, the credit assignment length is the full horizon, at most $100$. There are also some random intermediate rewards that are independent of actions. 

\textbf{Key-to-Door~\citep{raposo2021synthetic,mesnard2020counterfactual,chen2021decision}.} The agent has to reach the key in the first phase, picks apples up in the second phase, and goes to the single door. Thus, the policy has little memory, while the terminal reward depends on whether the agent picks the key up in the first phase. In this sense, it can be also viewed as a decomposed episodic reward problem with some noisy immediate rewards. \citet{raposo2021synthetic} tried the maximal horizon as $90$. The horizon in~\citet{mesnard2020counterfactual,chen2021decision} is unknown but should be similar.

\textbf{Catch with delayed rewards~\citep{raposo2021synthetic}.} It is a decomposed episodic reward problem (Eg.~\ref{eg:decomposed}). The terminal reward is given by how many times the agent catches the ball in an episode. They tried the maximal horizon as $280$. 

\textbf{Push-r-bump~\citep{yang2021recurrent}.} The agent has to first explore to find out the correct bump, and then move towards it to receive a terminal reward. The credit assignment length is thus long. The horizon is $50$.

\section{Experiment Details}
\label{app:exp_details}
\subsection{Memory-Based RL Implementation}
\label{sec:agents}

Our implementation builds upon prior work~\citep{ni2021recurrent}, providing a strong baseline on a variety of POMDP tasks, including Key-to-Door and PyBullet tasks used in this work. We denote this implementation\footnote{\url{https://github.com/twni2016/pomdp-baselines}} as \texttt{POMDP-baselines}.  All parameters of our agents in our experiments are trained end-to-end with model-free RL algorithms from scratch.
The following paragraphs provide detailed descriptions.\looseness=-1

\begin{figure}[h]
    \vspace{-0.5em}
    \centering
    \includegraphics[width=0.45\linewidth]{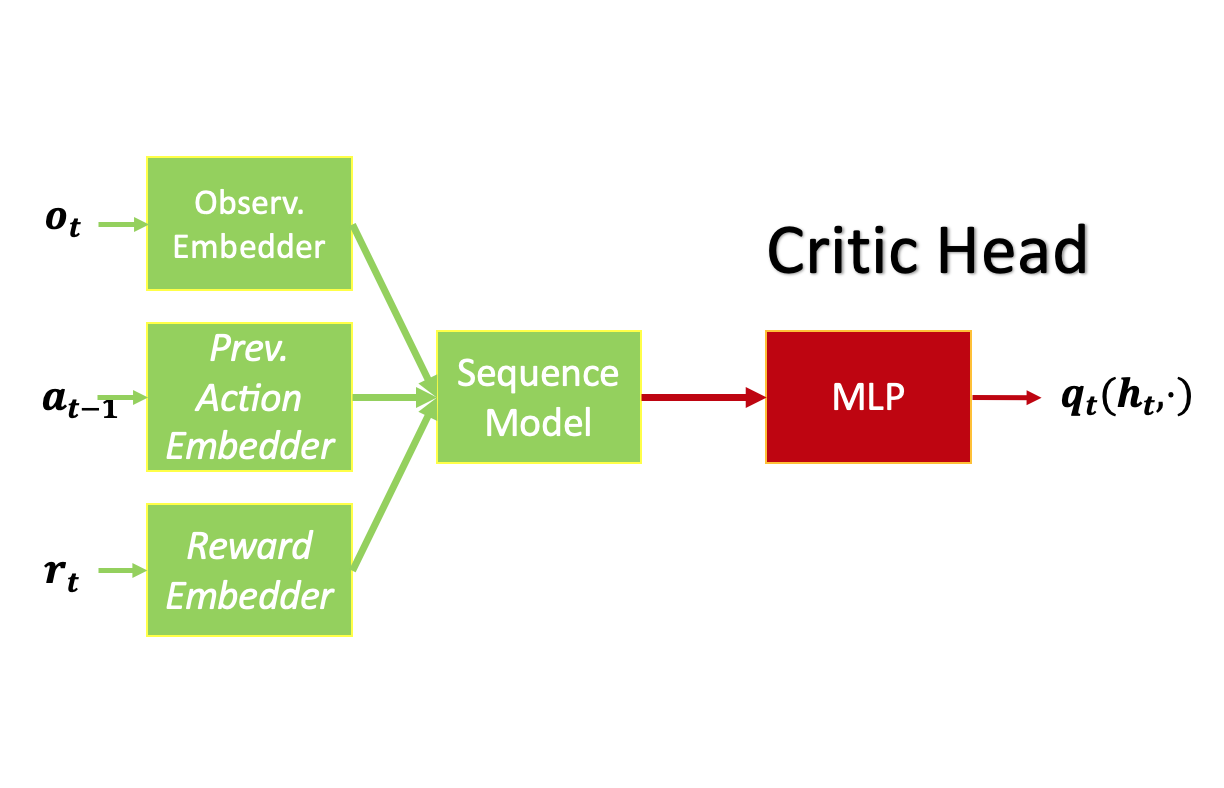}
    \includegraphics[width=0.45\linewidth]{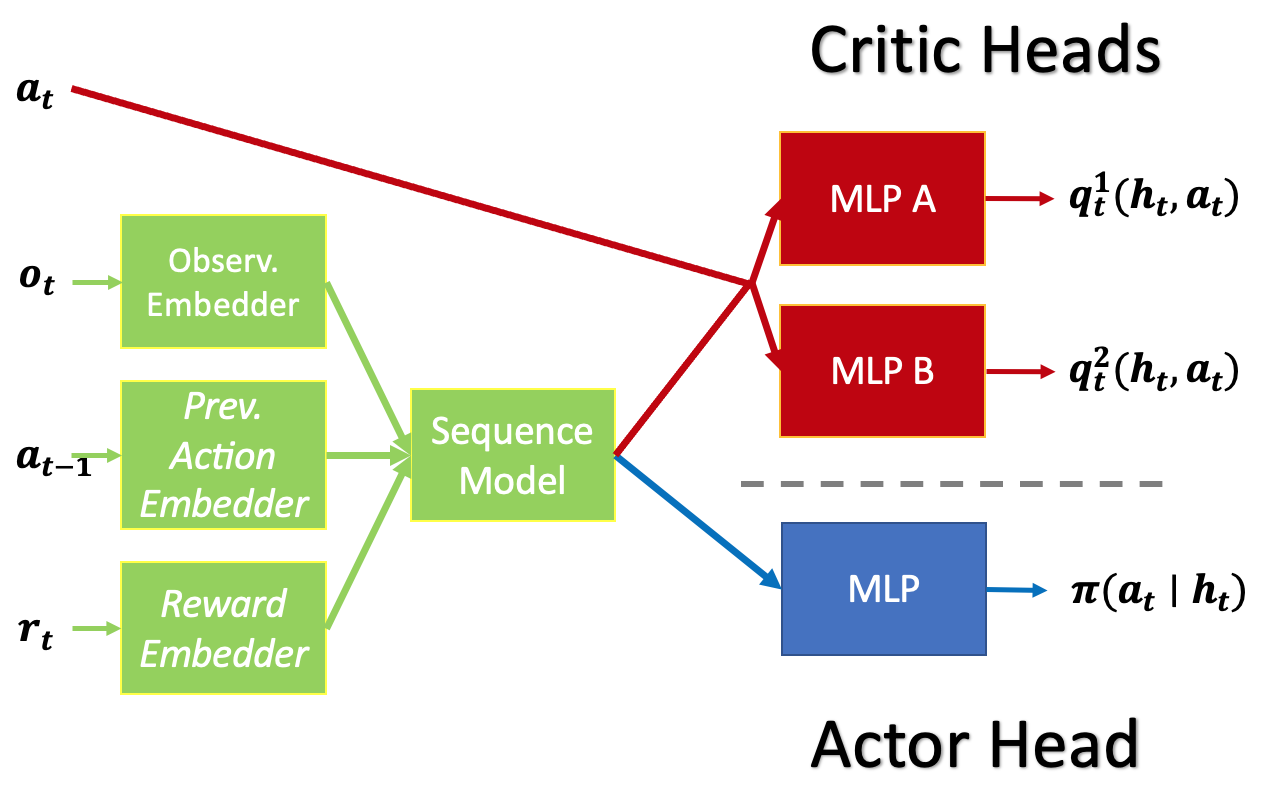}
    \vspace{-0.5em}
    \caption{\textbf{Agent architectures} for memory-based DDQN (left) and TD3 (right) in our implementation. While rewards are not used as inputs in our experiments, they are included in this figure for completeness. Compared to \texttt{POMDP-baselines}~\citep{ni2021recurrent}, we have removed shortcuts from $o_t$ to the MLPs and shared the same sequence model between the actor and critic.}
      \vspace{-1em}
    \label{fig:arch_shared}
\end{figure}

\paragraph{Sharing the sequence model in actor-critic with frozen critic parameters in the actor loss.} Our agent architecture, depicted in Fig.~\ref{fig:arch_shared}, simplifies the one in \texttt{POMDP-baselines} by sharing the sequence model between the actor and critic. 
Previously, \texttt{POMDP-baselines} found sharing the RNNs between actor and critic causes instability in gradients, thus they adopt the separate sequence encoders for actor and critic. 
We examine their code and find that it is mainly due to the gradient bias in the actor loss. This issue is also discussed in prior~\citep{yang2021recurrent} and concurrent works~\citep{grigsby2023amago} on memory-based RL and related to sharing CNN encoders in visual RL~\citep{kostrikov2020image,yarats2021improving}. Below we formalize the issue and introduce our solution.

In the \textbf{separate encoder} setting, we consider an actor and a critic with distinct history encoders $f_{\phi_\pi}$ and $f_{\phi_Q}$, mapping a history $h$ into latent states $z_\pi$ and $z_Q$, respectively. The history encoder parameters are denoted as $\phi_\pi$ and $\phi_Q$. 
The actor is parameterized by $\pi_{\nu} (f_{\phi_\pi}(h))$ and the critic is parameterized by $Q_{\omega}(f_{\phi_Q}(h),a)$ with $\nu$ and $\omega$ denoting their respective MLP parameters.
Considering the DDPG algorithm~\citep{lillicrap2015continuous} for training\footnote{The analysis can be generalized to stochastic actors by the reparametrization trick.} and a tuple of data $(h,a,o',r)$\footnote{It denotes current history, current action, next observation, and current reward.}, the loss of critic parameters is
\begin{align}
L_Q(\phi_Q,\omega) = \frac{1}{2}(Q_{\omega}(f_{\phi_Q}(h),a) - Q^{\text{tar}}(h,a,o',r))^2,
\end{align}
where $Q^{\text{tar}}(h,a,o',r) \defeq r + \gamma Q_{\overline\omega}(f_{\overline{\phi_Q}}(h'),\pi_{\overline{\nu}} (f_{\overline{\phi_\pi}}(h')) )$ and $\overline \theta$ denotes the stopped-gradient version of $\theta$ and $h' = (h,a,o')$. 
The loss of actor parameters is 
\begin{align}
L_\pi(\phi_\pi, \nu) = -Q_{\omega}(f_{\phi_Q}(h),\pi_{\nu} (f_{\phi_\pi}(h))).
\end{align}
The total loss of actor and critic parameters is thus $L_{\text{sep}}(\phi_\pi, \phi_Q,\omega, \nu) \defeq L_Q(\phi_Q,\omega) + L_\pi(\phi_\pi, \nu)$.
The gradients of $L_{\text{sep}}$ are
\begin{align}
\label{eq:separate_q_gradients}
\nabla_{\phi_Q} L_{\text{sep}} &= (Q_{\omega}(f_{\phi_Q}(h),a) - Q^{\text{tar}}(h,a,o',r)) \nabla_{z_Q} Q_{\omega}(f_{\phi_Q}(h),a) \nabla_{\phi_Q}f_{\phi_Q}(h)   \\
\nabla_{\omega} L_{\text{sep}} &= (Q_{\omega}(f_{\phi_Q}(h),a) - Q^{\text{tar}}(h,a,o',r)) \nabla_{\omega} Q_{\omega}(f_{\phi_Q}(h),a)\\
\nabla_{\phi_\pi} L_{\text{sep}} &= -\nabla_{a} Q_{\omega}(f_{\phi_Q}(h),\pi_{\nu} (f_{\phi_\pi}(h)))\nabla_{z_\pi} \pi_{\nu} (f_{\phi_\pi}(h)) \nabla_{\phi_\pi}f_{\phi_\pi}(h) \\
\nabla_{\nu} L_{\text{sep}} &= -\nabla_{a} Q_{\omega}(f_{\phi_Q}(h),\pi_{\nu} (f_{\phi_\pi}(h)))\nabla_{\nu} \pi_{\nu} (f_{\phi_\pi}(h))
\end{align}

Now we consider an actor and a critic with a \textbf{shared encoder} denoted as $f_{\phi}$, mapping a history $h$ into a latent state $z$. The history encoder has parameters $\phi$. The actor is parameterized by $\pi_{\nu} (f_{\phi}(h))$ and the critic is parameterized by
$Q_{\omega}(f_{\phi}(h),a)$.
The total loss of actor and critic 
\begin{equation}
L_{\text{sha}}(\phi,\omega,\nu) = L_Q(\phi,\omega) -Q_{\omega}(f_{\phi}(h),\pi_{\nu} (f_{\phi}(h)))
\end{equation}
has such gradients: 
\begin{equation}
\begin{split}
&\nabla_{\phi} L_{\text{sha}} = 
(Q_{\omega}(f_{\phi}(h),a) - Q^{\text{tar}}(h,a,o',r)) \nabla_{z} Q_{\omega}(f_{\phi}(h),a) \nabla_{\phi}f_{\phi}(h) \\
&-\nabla_{a} Q_{\omega}(f_{\phi}(h),\pi_{\nu} (f_{\phi}(h)))\nabla_{z} \pi_{\nu} (f_{\phi}(h)) \nabla_{\phi}f_{\phi}(h) -\underbrace{\nabla_z Q_{\omega}(f_{\phi}(h),\pi_{\nu} (f_{\phi}(h)))\nabla_{\phi}f_{\phi}(h)}_{\text{extra gradient term}}
\end{split}
\end{equation}
\begin{equation}
\begin{split}
\nabla_{\omega} L_{\text{sha}} &= (Q_{\omega}(f_{\phi}(h),a) - Q^{\text{tar}}(h,a,o',r)) \nabla_{\omega} Q_{\omega}(f_{\phi}(h),a)-\underbrace{\nabla_{\omega} Q_{\omega}(f_{\phi}(h),\pi_{\nu} (f_{\phi}(h)))}_{\text{extra gradient term}}
\end{split}
\end{equation}

Comparing the two gradient formulas across the two settings, we find the \textbf{extra gradient terms} using the same value of $\phi$ for $\phi_\pi$ and $\phi_Q$ in $L_{\text{sep}}$:
\begin{equation}
\begin{split}
&\nabla_{\phi_\pi}  L_{\text{sep}}(\phi, \phi,\omega, \nu) + \nabla_{\phi_Q}  L_{\text{sep}}(\phi, \phi,\omega, \nu) -\nabla_\phi L_{\text{sha}}(\phi,\omega, \nu) \\ 
&= \nabla_z Q_{\omega}(f_\phi(h),\pi_{\nu} (f_\phi(h)))\nabla_{\phi}f_\phi(h) \\ 
\end{split}
\end{equation}
\begin{align}
\nabla_{\omega} L_{\text{sep}}(\phi, \phi,\omega, \nu) -  \nabla_{\omega} L_{\text{sha}}(\phi,\omega, \nu) = \nabla_{\omega} Q_{\omega}(f_\phi(h),\pi_{\nu} (f_\phi(h)))
\end{align}

To remove these extra gradient terms, we propose freezing the critic parameters $\phi$ and $\omega$ in the actor loss for the shared encoder setting, resulting in a modified loss function $L_{\text{sha-ours}}(\phi,\omega,\nu)$:
\begin{align}
L_{\text{sha-ours}}(\phi,\omega,\nu) = L_Q(\phi,\omega) -Q_{\overline{\omega}}(f_{\overline\phi}(h),\pi_{\nu} (f_{\phi}(h))).
\end{align}
This offers a theoretical explanation for the practice of detaching critic parameters in the actor loss, adopted in prior works~\citep{kostrikov2020image,yarats2021improving,yang2021recurrent}. 
It is worth noting that these works take an additional step of freezing the encoder parameters present in actions, resulting in an expression like $Q_{\overline\omega}(f_{\overline\phi}(h),\pi_{\nu} (f_{\overline\phi}(h)))$. In our preliminary experiments conducted on PyBullet tasks, we observed negligible differences in the empirical performance between their method and ours. Consequently, to maintain consistency with our theoretical insights, we have chosen to implement our approach $L_{\text{sha-ours}}$ in all of our experimental setups.

\paragraph{Implementation of sequence encoders in our experiments.}
We train both LSTM and Transformer sequence encoders with a full context length equal to the episode length on all tasks, except for PyBullet tasks where we follow \texttt{POMDP-baselines} to use a context length of $64$.
The training input of sequence encoders is based on a history of $t$ observations and $t$ actions (with the first action zero-padded). The history is embedded as follows according to the  \texttt{POMDP-baselines}. 
For state-based tasks (T-Mazes and PyBullet), the observations and actions are individually embedded through a linear and ReLU layer. 
For pixel-based tasks (Passive Visual Match and Key to Door), the observations (images) are embedded through a small convolutional neural network. 
For all tasks, the sequence of embedded observations and actions are then concatenated along their final dimension to form the input sequence to the sequence model. 

For \textbf{LSTMs}~\citep{hochreiter1997long}, we train LSTMs with a hidden size of $128$, varying the number of layers from $1,2,4$. We find single-layer LSTM performs best in T-Mazes and Passive Visual Match, and two-layer best in Key-to-Door, which are reported in Fig.~\ref{fig:passive} and Fig.~\ref{fig:active}. See Table~\ref{tab:LSTM} for details.

For \textbf{Transformers}, we utilize the GPT-2 model~\citep{radford2019language} implemented by Hugging Face Transformer library~\citep{wolf2020huggingfaces}. Our Transformer is a stack of $N$ layers with $H$-headed self-attention modules. It includes causal masks to condition only on past context. 
A sinusoidal positional encoding~\citep{vaswani2017attention} is added to the embedded sequence. The same dropout rate is used for regularization on the embedding, residual, and self-attention layers. 
We tune $(N,H)$ from $(1,1),(2,2),(4,4)$. 
We find $(N,H)=(1,1)$ performs best in Passive T-Maze and Passive Visual Match, and $(2,2)$ best in Active T-Maze and Key-to-Door, which are reported in Fig.~\ref{fig:passive} and Fig.~\ref{fig:active}. See Table~\ref{tab:Transformers} for details.

For PyBullet tasks, we follow  \texttt{POMDP-baselines} to use single-layer LSTMs and Transformers. 

\begin{table}[h]
\centering
    \begin{minipage}{.45\linewidth}
      \caption{\textbf{LSTM} hyperparameters used for all experiments.}
        \vspace{-0.5em}
      \label{tab:LSTM}
      \centering
        \begin{tabular}{l| cc}
    \toprule
    &\textbf{Hyperparameter} & \textbf{Value}\\
    \toprule
         State& Obs. embedding size& 32 \\
          embedder &Act. embedding size & 16\\
        \midrule
          &No. channels & (8, 16)\\
         Pixel &Kernel size & 2\\
         embedder&Stride & 1\\
         &Obs. embedding size & 100\\
         &Act. embedding size & 0\\
         \midrule 
         \multirow{2}{*}{LSTM}&Hidden size & 128\\
         &No. layers & 1, 2, 4\\
         \bottomrule
    \end{tabular}
      \vspace{-1em}
    \end{minipage}\quad\quad
    \begin{minipage}{.45\linewidth}
      \centering
        \caption{\textbf{Transformer} hyperparameters used for all experiments.}
    \vspace{-0.5em}
        \label{tab:Transformers}
        \begin{tabular}{l| cc}
    \toprule
    &\textbf{Hyperparameter} & \textbf{Value}\\
    \toprule
          State& Obs. embedding size& 64 \\
         embedder&Act. embedding size & 64\\
         \midrule
          &No. channels & (8, 16)\\
         Pixel&Kernel size & 2\\
         embedder&Stride & 1\\
         &Obs. embedding size & 100\\
         &Act. embedding size & 0\\
         \midrule
         \multirow{3}{*}{GPT}&Dropout & 0.1\\
         &No. heads ($H$) & 1, 2, 4\\
         &No. layers ($N$) & 1, 2, 4\\
         \bottomrule
    \end{tabular}
      \vspace{-1em}

    \end{minipage} 
\end{table}

\paragraph{Implementation of RL algorithms in our experiments.} We use SAC-Discrete~\citep{christodoulou2019soft} for pixel-based tasks, and TD3~\citep{fujimoto2018addressing} for PyBullet tasks, following \texttt{POMDP-baselines}. We use DDQN~\citep{van2016deep} for T-Maze tasks, which outperforms SAC-Discrete in our preliminary experiments. We use epsilon-greedy exploration strategy in DDQN with a linear schedule, where the ending epsilon is $\frac{1}{T}$ with $T$ being the episode length. This ensures that the probability of \textit{always} taking deterministic actions throughout an episode asymptotically approaches a constant:
\begin{align}
\lim_{T\to+\infty} (1-\epsilon)^T = \lim_{T\to+\infty} \left(1-\frac{1}{T}\right)^T = \frac{1}{e} \approx 0.368
\end{align}
where $e$ is the base of the natural logarithm. This approach is critical to solving T-Maze tasks which strictly require a deterministic policy. Table~\ref{tab:RL_hparams} summarizes the details of RL agents. 

\begin{table}[h]
    \centering
    \caption{\textbf{RL agent} hyperparameters used in all experiments. }
    \vspace{-0.5em}
    \begin{tabular}{l|cc}
    \toprule
       & Hyperparameter  & Value  \\
       \midrule
       & Network hidden size & $(256, 256)$ \\ 
       & Discount factor ($\gamma$) & $0.99$ \\ 
       & Target update rate & $0.005$ \\ 
       & Replay buffer size & $10^6$ \\ 
       & Learning rate & $0.0003$ \\ 
       & Batch size & $64$ \\ 
       \midrule
       \multirow{2}{*}{DDQN} & epsilon greedy schedule & linear$(1.0, \frac{1}{T}, \mathtt{schedule\_steps})$ \\ 
       & \texttt{schedule\_steps} & $0.1 *$ \texttt{num\_episodes} \\ 
       \midrule
       SAC-Discrete & entropy temperature & $0.1$ \\       
        \bottomrule
    \end{tabular}
    \label{tab:RL_hparams}
\end{table}
  \vspace{-1em}

\begin{table}[h]
    \centering
    \caption{Training hyperparameters in our experiments.}
    \vspace{-0.5em}
    \label{tab:training}
    \begin{tabular}{ccc}
    \toprule
      Tasks   & \texttt{context\_length} & \texttt{num\_episodes}  \\
      \midrule
      Passive T-Maze   & $1500$ & $8000$  \\ 
      Active T-Maze & $500$ & $4000$  \\ 
      Passive Visual Match & $1000$ & $10000$  \\ 
      Key-to-Door & $500$ & $4000$  \\ 
      \bottomrule
    \end{tabular}
\end{table}
  \vspace{-1em}

\subsection{Additional Results}

Fig.~\ref{fig:key-to-door-scale} shows the learning curves of training Transformers with varying numbers of layers and heads. 

Similar to the scaling experiments on Transformers we demonstrated in Sec.~\ref{sec:active}, we conduct an ablation study on scaling the number of layers in LSTMs from 1, 2 to 4 in credit assignment tasks.  
Fig.~\ref{fig:scale_lstm} shows the results that multi-layer (stacked) LSTMs do not help performance, aligned with empirical findings in stacked LSTMs~\citep{pascanu2013construct}.

\begin{figure}[h]
    \centering
\includegraphics[width=0.24\linewidth]{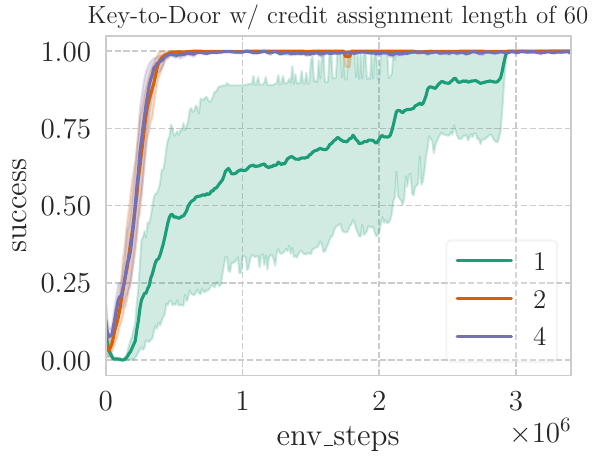}
\includegraphics[width=0.24\linewidth]{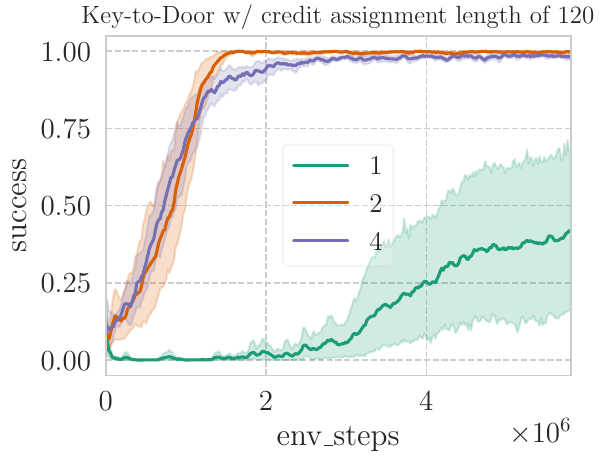}
\includegraphics[width=0.24\linewidth]{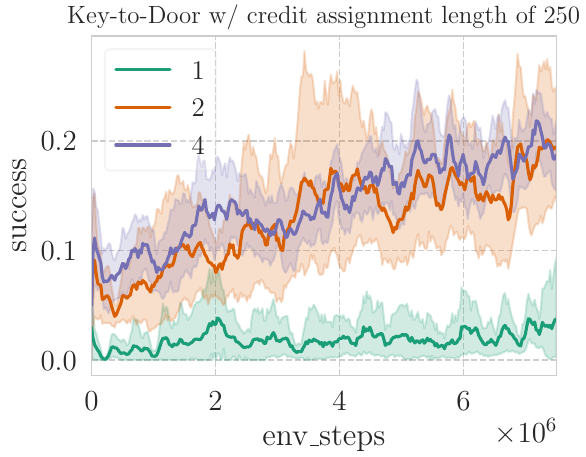}
\includegraphics[width=0.24\linewidth]{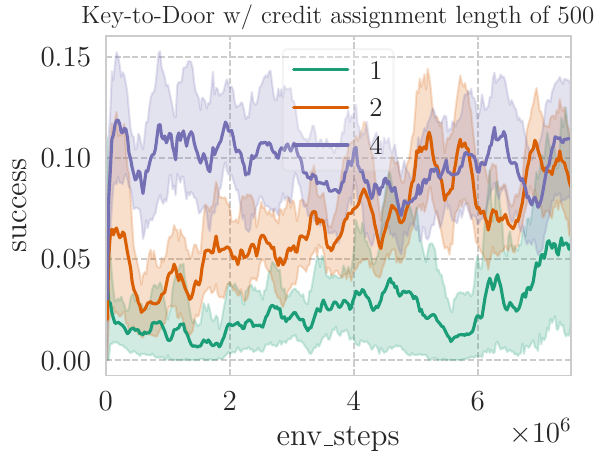}
\vspace{-0.5em}
    \caption{Learning curves of scaling the number of layers and heads in Transformers in Key-to-Door tasks, associated with Fig.~\ref{fig:scale_gpt}. }
    \label{fig:key-to-door-scale}
\end{figure}
  \vspace{-1em}
  
\begin{figure}[h]
    \centering
\includegraphics[width=0.33\linewidth]{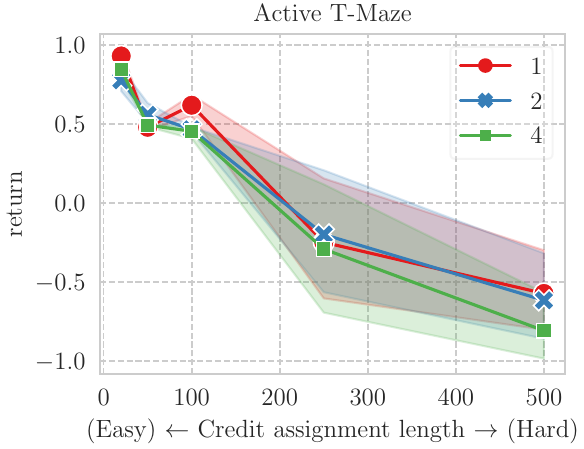}
\includegraphics[width=0.33\linewidth]{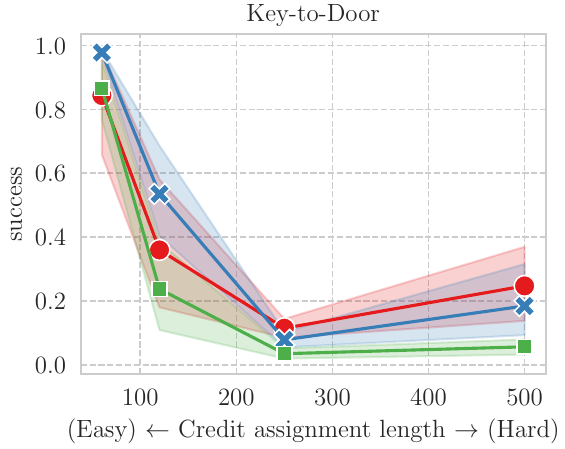}
\vspace{-0.5em}
    \caption{Scaling the number of layers in LSTMs does not affect performance much. }
    \label{fig:scale_lstm}
\end{figure}
  \vspace{-1em}

\subsection{Plotting Details}
For all the learning curve plots in this paper, we use \texttt{seaborn.lineplot}~\citep{Waskom2021} to show the mean and its $95\%$ confidence interval across $10$ seeds. For the aggregation plots, we also use \texttt{seaborn.lineplot} to report the final performance, which is averaged over the evaluation results during the last $5\%$ of interactions.

\subsection{Training Compute}

Here we report the training compute of the experiments in Sec.~\ref{sec:passive} and Sec.~\ref{sec:active}, because the PyBullet experiments (Sec.~\ref{sec:short}) are relatively cheap to compute. 
Each experiment run was carried out on a single A100 GPU and a single CPU core. 
For Transformer-based RL, the GPU memory usage is approximately proportional to the square of the context lengths, with a maximum usage of 4GB for Passive T-Maze with a context length of $1500$. For LSTM-based RL, GPU memory usage is roughly linearly proportional. 

In our tasks, the context length equals the episode length (and also memory length), thus the total training time is proportional to \texttt{num\_episodes * context\_length**2 * update\_frequency} for both Transformers and LSTMs.  The \texttt{update\_frequency}, set as $0.25$, is the ratio of parameter update \wrt environment step. Additionally, for multi-layer Transformers, the training time is roughly linear to the number of layers. 
Table~\ref{tab:training} summarizes the training hyperparameters.

In Passive T-Maze with a memory length of $1500$, it took around 6 and 4 days to train Transformer-based and LSTM-based RL, respectively.
In Passive Visual Match with a memory length of $1000$, it took around 5 days for both Transformer-based and LSTM-based RL.

\section{Broader Impacts}

Our work enhances understanding of long-term memory capability in Transformer-based RL. This improved memory capability, while offering advancements in RL, may pose privacy concerns if misused, potentially enabling systems to retain and misuse sensitive information. As this technology develops, strict data privacy measures are essential. However, negative impacts directly tied to our foundational research are speculative, as we propose no specific application of this technology.

\end{document}